\documentclass{article}

\usepackage{hyperref}
\usepackage{amsmath}
\usepackage{amssymb}
\usepackage{amsthm}
\usepackage{tikz}
\usepackage{pgfplots}
\usepackage{caption}

\pgfplotsset{compat=1.4} 


\captionsetup[figure]{font={footnotesize,bf}}

\catcode64=11
\newcounter{global}
\theoremstyle{definition}
\newtheorem{definition}[global]{Definition}
\theoremstyle{plain}
\newtheorem{theorem}[global]{Theorem}
\newtheorem{proposition}[global]{Proposition}
\newtheorem{lemma}[global]{Lemma}
\newtheorem{corollary}[global]{Corollary}
\newtheoremstyle{note}{}{}{}{}{\itshape}{.}{.5em}{}
\theoremstyle{note}
\newtheorem{remark}{Remark}%
\newtheorem{example}{Example}%
\renewcommand\section{%
  \@startsection {section}{1}{\z@}%
  {-3.5ex \@plus -1ex \@minus -.2ex}%
  {2.3ex \@plus.2ex}%
  {\normalfont\large\bfseries}}
\def\itm#1{{\rm(\textit{\romannumeral#1})}}
\newcommand{\univ}[1][\Gamma]{\ensuremath{\mathrm{Th}(#1,Y)}}
\def\lexlt{\ensuremath{\mathrel{\lhd}}}
\catcode64=12


\begin{document}

\title{On sets of graded attribute implications with witnessed non-redundancy}

\date{\normalsize%
  Dept. Computer Science, Palacky University Olomouc}

\author{Vilem Vychodil\footnote{%
    e-mail: \texttt{vychodil@binghamton.edu},
    phone: +420 585 634 705,
    fax: +420 585 411 643}}

\maketitle

\begin{abstract}
  We study properties of particular non-redundant sets of if-then rules
  describing dependencies between graded attributes. We introduce notions of
  saturation and witnessed non-redundancy of sets of graded attribute
  implications are show that bases of graded attribute implications given
  by systems of pseudo-intents correspond to non-redundant sets of graded
  attribute implications with saturated consequents where the non-redundancy
  is witnessed by antecedents of the contained graded attribute implications.
  We introduce an algorithm which transforms any complete set
  of graded attribute implications parameterized by globalization into
  a base given by pseudo-intents. Experimental evaluation is provided to
  compare the method of obtaining bases for general parameterizations
  by hedges with earlier graph-based approaches.
\end{abstract}

\section{Introduction}
In this paper, we introduce the notion of a witnessed non-redundancy of sets of
graded attribute implications, study its properties and its relationship to
the notion of a general system of pseudo-intents which has been
introduced earlier~\cite{BeChVy:Ifdwfa}.
The graded attribute implications (also known
as fuzzy attribute implications) are if-then rules which generalize the
ordinary attribute implications which appear in formal concept
analysis~\cite{GaWi:FCA}. The graded attribute implications are more general
formulas than the classic attribute implications in that they allow to express
attribute dependencies to degrees. For instance, a rule 
\begin{align}
  \{
  {}^{0.9\!}/\text{\emph{good neighborhood}},
  {}^{1\!}/\text{\emph{large}}\} \Rightarrow
  \{{}^{0.98\!}/\text{\emph{expensive}}\}
\end{align}
saying that if an object (e.g., a house for sale) is located in
a good neighborhood and is large, then it is expensive, may be seen as
a typical example of a graded attribute implication. In this example,
the values $0.9$, $1$, and $0.98$ (taken from the real unit interval)
express lower bounds (or thresholds) of truth degrees to which we
consider the attributes valid in data.
Therefore, a finer reading of the rule is: ``if a house is located in a good
neighborhood \emph{at least to degree $0.9$} and is large
\emph{at least to degree $1$},
then it is expensive \emph{at least to degree $0.98$}''. In formal concept analysis
(FCA) of graded object-attribute data and in particular in the approach
to FCA with linguistic hedges~\cite{BeVy:Fcalh}, graded attribute implications
play an analogous role as the classic attribute implications in the ordinary FCA.

In FCA, one typically wants to find a small representative set of attribute
implications which conveys the information about all attribute implications which
hold is a given formal context. Equivalently, one wishes to find a small set
of attribute implications whose models are exactly all concept intents of
the data. Guigues-Duquenne bases~\cite{GuDu} which are determined by
pseudo-intents of formal concepts are examples of such
sets which are in addition minimal in terms of the number of contained formulas,
cf. also~\cite{Ga:Tbaca}.
In FCA with graded attributes, a general notion of a system of pseudo-intents has
been proposed and studied, see~\cite{BeVy:ADfDwG} for a survey. Unlike the classic
case, general systems of pseudo-intents are not unique and may not ensure
minimality of the corresponding base. Also, such systems may not exist when
the structure of degrees is infinite and their existence for general finite
scales is an open problem. From the computational point of view, graph-theoretic
methods for obtaining general systems of pseudo-intents are proposed but they are
limited only to small data sets. Therefore, further investigation is needed
and this paper makes a contribution to this area.

In this paper, we show that bases of graded attribute implications given by
general systems of pseudo-intents correspond to non-redundant sets of
graded attribute implications with saturated consequents where the
non-redundancy of each formula in the set is witnessed by its own antecedent.
Both the notions of \emph{saturation} and \emph{witnessed non-redundancy}
are introduced in Section~\ref{sec:results}. Furthermore, we introduce
a constructive method for transformation of any set of graded attribute
implications (which is complete in a given data) to a non-redundant base
with witnessed non-redundancy from which a system of pseudo-intents
can be derived. In practice, this means that we can avoid the graph-based
method and compute systems of pseudo-intents by an alternative and much
faster approach. We prove that the proposed procedure works if we consider
globalization~\cite{TaTi:Gist} as a parameter of the interpretation of
graded attribute implications. For linguistic
hedges~\cite{BeVy:Fcalh,EsGoNo:Hedges,Haj:Ovt} other
than the globalization,
which serve as parameters of the interpretation of graded attribute implications,
the procedure may not produce the desired base but as our experimental
observations show, it seems to have a high success rate.

The results contained in our paper fall in the category of results on
bases of if-then rules generated from
data~\cite{BaOb:Ocdgbi,BeMo:Tmfcduib,LoBeCoEnMo:Tdobvr}
which develop ideas of the seminal paper~\cite{GuDu}. Although we
work with graded if-then rules with semantics defined using complete
residuated lattices as structures of degrees and parameterized by
linguistic hedges, our approach is general and we anticipate it can be
adopted in recently developed approaches
such as~\cite{KoMeOjAc:Maclhch,MeOjAc:Dmacl}.

Our paper is organized as follows. In Section~\ref{sec:prelim}, we present the
basic notions of residuated structures of truth degrees and graded attribute
implications. In Section~\ref{sec:bases_survey}, we present a background and
a survey of existing results on non-redundant bases of graded attribute
implications and give further motivation for our work.
Section~\ref{sec:results} contains the new results.
Finally, Section~\ref{sec:notes} shows experimental observations on efficiency
on computing sets of graded attribute implications with witnessed non-redundancy
and presents open problems.

\section{Preliminaries}\label{sec:prelim}
In this section, we present the basic notions of structures of truth degrees
and graded attribute implications. Whenever possible, we keep the same notation
as in~\cite{Bel:FRS} for general residuated structures and~\cite{BeVy:ADfDwG}
for graded attribute implications.

We utilize complete residuated lattices as structures of truth degrees.
For our development, these structures represent a reasonable generalization
of the most common structures of degrees defined on the real unit interval
using left-continuous triangular norms~\cite{EsGo:MTL,KMP:TN}.
Recall that a complete residuated lattice \cite{Bel:FRS,GaJiKoOn:RL}
is an algebra
$\mathbf{L}=\langle L,\wedge,\vee,\otimes,\rightarrow,0,1\rangle$ where
$\langle L,\wedge,\vee,0,1 \rangle$ is
a complete lattice (i.e., a lattice where infima and suprema exist
for arbitrary subsets of~$L$), $\langle L,\otimes,1 \rangle$ is
a commutative monoid (i.e., $\otimes$ is commutative, associative,
and $1$ is neutral with respect to~$\otimes$),
and $\otimes$ and $\rightarrow$ satisfy the so-called
adjointness property: for all $a,b,c \in L$,
we have that $a \otimes b \leq c$ if{}f $a \leq b \rightarrow c$,
where $\leq$ is the (complete lattice) order induced by $\mathbf{L}$
(i.e., $a \leq b$ if{}f $a = a \wedge b$ if{}f $a \vee b = b$
if{}f $a \rightarrow b = 1$).
We interpret $\otimes$ and $\rightarrow$ as it is usual in
mathematical fuzzy logics~\cite{CiHa:Tnbpfl,Gog:Lic,Got:Mfl,Haj:MFL}
and their applications~\cite{KlYu}:
$\otimes$ is a truth function of ``fuzzy conjunction'' and $\rightarrow$
is a truth function of ``fuzzy implication'',
cf. also~\cite{CiHaNo1,CiHaNo2} for surveys of results on
fuzzy logics in the narrow sense.

In the paper, we use illustrative examples based on finite (and thus complete)
residuated lattices defined on equidistant subchains of the real unit interval.
That is, we consider $L = \bigl\{0,\frac{1}{n},\frac{2}{n},\ldots,1\bigr\}$ for
some natural $n$ and use the natural ordering of rational numbers, i.e.,
$\wedge$ and $\vee$ coincide with the operations of minimum and maximum,
respectively. If $\otimes$ coincides with $\wedge$, we call the resulting
$\mathbf{L}$ a finite G\"odel chain in which case we have $a \rightarrow b = 1$
if{}f $a \leq b$ and $a \rightarrow b = b$ otherwise. If $\otimes$ and $\rightarrow$
are given by
\begin{align}
  \textstyle \frac{i}{n} \otimes \frac{j}{n} &=
  \textstyle \max\bigl\{0, \frac{i}{n} + \frac{j}{n} - 1\bigr\},
  \\
  \textstyle \frac{i}{n} \rightarrow \frac{j}{n} &=
  \textstyle \min\bigl\{1, 1 - \frac{i}{n} + \frac{j}{n}\bigr\},
\end{align}
we call the resulting $\mathbf{L}$
a finite \L ukasiewicz chain. More general finite residuated lattices
on equidistant subchains of the real unit interval may be considered but in our
examples we utilize only these two basic structures, cf.~\cite{DeMe:TNPL,KMP:TN}.

In addition to $\otimes$ and $\rightarrow$ which may be seen as generalizations
of truth function of the classic logical connectives ``conjunction'' and ``implication'',
we make use of linguistic hedges~\cite{Za:Afstilh,Za:lv1,Za:lv2,Za:lv3} which do
not have nontrivial counterparts in classic logics. In particular, we utilize
idempotent truth-stressing (i.e., truth intensifying) linguistic hedges
(shortly, hedges), which are considered as maps ${}^*\!: L \to L$ such that 
$1^* = 1$, $a^* \leq a$,
$(a \rightarrow b)^* \leq a^* \rightarrow b^*$, and
$a^* \leq a^{**}$ for all $a,b \in L$.
Using similar arguments as in~\cite{Haj:Ovt}, such maps
may be seen as truth functions of logical connectives ``very true'',
cf. also \cite{EsGoNo:Hedges} and~\cite{CiHaNo2} for recent results on hedges.
Two basic hedges can be introduced on any complete residuated lattice. Namely,
(i) the identity (i.e., $a^* = a$ for all $a \in L$),
and (ii) the so-called globalization~\cite{TaTi:Gist}:
\begin{align}
  a^{\ast} = \left\{
    \begin{array}{@{\,}l@{\quad}l@{}}
      1, & \text{if } a = 1, \\
      0, & \text{otherwise,}
    \end{array}
  \right.
  \label{eqn:glob}
\end{align}
for all $a \in L$. Note that on linear residuated lattices,
the globalization coincides with the truth function of
the Baaz~$\Delta$ connective \cite{Baaz}.

For a fixed complete residuated lattice $\mathbf{L}$ and
a non-empty universe set $Y$, an $\mathbf{L}$-set $A$ in $Y$
(or an $\mathbf{L}$-fuzzy set~\cite{Gog:LFS}) is
any map $A\!: Y \to L$. As usual, $A(y)$ is interpreted as the degree
to which $y$ belongs to $A$.
The collection of all $\mathbf{L}$-sets in $Y$ is denoted by $L^Y$.
In a similar fashion, we introduce binary
$\mathbf{L}$-relations: for non-empty universe sets $X$ and $Y$,
a (binary) $\mathbf{L}$-relation between $X$ and $Y$
(or an $\mathbf{L}$-fuzzy relation between $X$ and $Y$)
is any map $R\!: X \times Y \to L$ with $R(x,y)$ understood
as the degree to which $x$ and $y$ are $R$-related (or related by $R$).
It is convenient to treat binary $\mathbf{L}$-relations 
between $X$ and $Y$ as $\mathbf{L}$-set in $X \times Y$.
We write $\mathbf{L}$-sets and $\mathbf{L}$-relation on finite
universes in the usual way, i.e.,
$\{{}^{a_1\!}/y_1,\ldots,{}^{a_n\!}/y_n\}$ denotes an $\mathbf{L}$-set $A$
in $Y = \{y_1,\ldots,y_n\}$ such that $A(y_i) = a_i$ for all $i=1,\ldots,n$.
Optionally, we omit ${}^{a_i\!}/y_i$ whenever $a_i = 0$ and write
just $y_i$ instead of ${}^{a_i\!}/y_i$ whenever $a_i = 1$.
In particular, $\{\}$ denotes the empty $\mathbf{L}$-set in $Y$,
i.e., $\{\}(y) = 0$ for all $y \in Y$.
The basic operations with $\mathbf{L}$-sets are defined componentwise using
operations in $\mathbf{L}$.
For instance, if $A$ and $B$ are $\mathbf{L}$-sets in $Y$,
then $A \cap B$ denotes the $\mathbf{L}$-set in $Y$ (called the intersection
of $A$ and~$B$) such that $(A \cap B)(y) = A(y) \wedge B(y)$ for each $y \in Y$
and analogously for $\cup$ and $\vee$.

We utilize the notion of a graded subsethood~\cite{Gog:LFS,Gog:Lic}
(graded inclusion)
which generalizes the ordinary set inclusion. For any $A,B \in L^Y$, we define
a degree $S(A,B)$ of subsethood of $A$ in $B$ by
\begin{align}
  S(A,B) = \textstyle{\bigwedge}_{y \in Y}\bigl(A(y) \rightarrow B(y)\bigr).
  \label{eqn:S}
\end{align}
Clearly, $S(A,B)$ is a general degree in $L$. If $S(A,B) = 1$, we denote the fact
by $A \subseteq B$ and say that $A$ is (fully) included in $B$. Notice that in this
case, we have $A(y) \leq B(y)$ for all $y \in Y$ (this is owing to the fact that
$a \rightarrow b = 1$ if{}f $a \leq b$).

Now, graded attribute implications and their interpretation~\cite{BeVy:ADfDwG}
may be introduced as follows.
Let $Y$ be a finite non-empty set of (symbolic names of) attributes.
A~graded attribute implication in $Y$ is an expression $A \Rightarrow B$
where $A,B \in L^Y$; $A$ is called the antecedent of $A \Rightarrow B$,
$B$ is called the consequent of $A \Rightarrow B$.
Alternatively, graded attribute implications are called 
fuzzy attribute implication~\cite{BeVy:ICFCA} and for brevity we refer to
the formulas as FAIs.
For $A,B,M \in L^Y$, we define the degree 
$||A \Rightarrow B||_M$ to which $A \Rightarrow B$ is true in $M$ by
\begin{align}
  ||A \Rightarrow B||_M &= S(A,M)^* \rightarrow S(B,M).
  \label{eqn:fai_truth}
\end{align}
Recall that in~\eqref{eqn:fai_truth}, $S$ stands for graded subsethood~\eqref{eqn:S},
and ${}^*$ is a hedge on $\mathbf{L}$. Furthermore,
if $\mathcal{M} \subseteq L^Y$, then we put 
\begin{align}
  ||A \Rightarrow B||_\mathcal{M} &=
  \textstyle \bigwedge_{M \in \mathcal{M}}||A \Rightarrow B||_M
  \label{eqn:fai_truth_syst}
\end{align}
and call $||A \Rightarrow B||_\mathcal{M}$ the degree to which 
$A \Rightarrow B$ is true in $\mathcal{M}$.

\begin{remark}
 According to its definition, the degree to which $A \Rightarrow B$
 is true in $M$ depends not only on the operations in $\mathbf{L}$ (namely,
 $\bigwedge$ and $\rightarrow$) but also on the hedge ${}^*$.
 The hedge in~\eqref{eqn:fai_truth} may be seen as a parameter of the interpretation
 of $A \Rightarrow B$ in $M$ and will play an important role in our paper,
 see also~\cite{BeVy:ADfDwG} for detailed comments on the role of hedges.
 Also note that if $M$ is regarded as an $\mathbf{L}$-set of attributes 
 of an object (i.e., $M(y)$ is a degree to which an object has the attribute $y$),
 then $||A \Rightarrow B||_M$ is a degree to which it is true that ``If the object
 has all the attributes from $A$, then it has all the attributes from $B$''.
 This naturally generalizes the ordinary attribute implications and their
 semantics, see~\cite{GaWi:FCA}.
\end{remark}

Consider fixed $Y$ and let $\Sigma$ be a set of FAIs.
An $\mathbf{L}$-set $M \in L^Y$ is called a model of $\Sigma$ whenever
$||A \Rightarrow B||_M = 1$ for all $A \Rightarrow B \in \Sigma$.
The set of all models of $\Sigma$ is denoted by $\mathrm{Mod}(\Sigma)$.
The degree $||A \Rightarrow B||_\Sigma$ to
which $A \Rightarrow B$ is semantically entailed by $\Sigma$ is defined by
\begin{align}
  ||A \Rightarrow B||_\Sigma &=
  \textstyle\bigwedge_{M \in \mathrm{Mod}(\Sigma)}||A \Rightarrow B||_M.
  \label{eqn:sement}
\end{align}
Therefore, the degree to which a FAI (semantically) follows
from $\Sigma$ is defined as the infimum of all degrees to which it is true in all
models of $\Sigma$. This is consistent with the abstract logic framework proposed by
Pavelka~\cite{Pav:Ofl1,Pav:Ofl2,Pav:Ofl3} which was inspired by the influential
paper~\cite{Gog:Lic} by Goguen. Note that since
$\mathrm{Mod}(\Sigma) \subseteq L^Y$,
we may write
$||A \Rightarrow B||_\Sigma = ||A \Rightarrow B||_{\mathrm{Mod}(\Sigma)}$
on account of~\eqref{eqn:fai_truth_syst}. 

In our paper, we exploit a characterization of the semantic entailment which is
based on least models. The system $\mathrm{Mod}(\Sigma)$ of all models of any $\Sigma$
is known to form a particular closure system
(called an $\mathbf{L}^{\!*}$-closure system~\cite{BeFuVy:Fcots}),
see~\cite{BeVy:Pmfai}. Therefore, we may introduce
the least model $[M]_\Sigma$ of $\Sigma$ which contains $M$:
\begin{align}
  [M]_\Sigma
  &= 
  \textstyle\bigcap\{N \in \mathrm{Mod}(\Sigma);\, M \subseteq N\}.
  \label{eqn:least_mod}
\end{align}
The following proposition establishes a characterization of the semantic entailment
by least models and graded subsethood, see~\cite[Theorem 3.11]{BeVy:ADfDwG}.

\begin{proposition}
  For any $\Sigma$ and $A,B \in L^Y$\!:
  $||A \Rightarrow B||_\Sigma = S(B,[A]_\Sigma)$.
  \label{pr:sementail_char}
  \qed
\end{proposition}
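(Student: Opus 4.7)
The plan is to establish the two inequalities $||A \Rightarrow B||_\Sigma \leq S(B,[A]_\Sigma)$ and $S(B,[A]_\Sigma) \leq ||A \Rightarrow B||_\Sigma$ separately, using the description of $[A]_\Sigma$ as the least model of $\Sigma$ containing $A$.

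For the first inequality, the idea is that $[A]_\Sigma$ is itself a model of $\Sigma$, since it is an intersection of models and $\mathrm{Mod}(\Sigma)$ is closed under intersections. Hence, by the definition~\eqref{eqn:sement} of $||A \Rightarrow B||_\Sigma$ as an infimum over models, one has $||A \Rightarrow B||_\Sigma \leq ||A \Rightarrow B||_{[A]_\Sigma}$. Expanding the right-hand side via~\eqref{eqn:fai_truth} gives $S(A,[A]_\Sigma)^* \rightarrow S(B,[A]_\Sigma)$, and since $A \subseteq [A]_\Sigma$ by construction one has $S(A,[A]_\Sigma) = 1$, so the hedge axiom $1^* = 1$ collapses the expression to $S(B,[A]_\Sigma)$.

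For the reverse inequality, the task is to show $S(B,[A]_\Sigma) \leq S(A,M)^* \rightarrow S(B,M)$ for every $M \in \mathrm{Mod}(\Sigma)$. By adjointness it suffices to prove $S(A,M)^* \otimes S(B,[A]_\Sigma) \otimes B(y) \leq M(y)$ pointwise for each $y \in Y$. The key lever is the $\mathbf{L}^*$-closure of $\mathrm{Mod}(\Sigma)$ alluded to just before the proposition: for $c \in L$ and $M \in \mathrm{Mod}(\Sigma)$, the shifted $\mathbf{L}$-set defined componentwise by $(c^* \rightarrow M)(y) = c^* \rightarrow M(y)$ is again a model of $\Sigma$. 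Setting $c = S(A,M)$, one checks $A \subseteq c^* \rightarrow M$ directly from $c^* \leq c \leq A(y) \rightarrow M(y)$, so the minimality of $[A]_\Sigma$ among models containing $A$ yields $[A]_\Sigma \subseteq c^* \rightarrow M$, i.e., $c^* \otimes [A]_\Sigma(y) \leq M(y)$. Combining this with the elementary bound $S(B,[A]_\Sigma) \otimes B(y) \leq [A]_\Sigma(y)$, which is just adjointness applied to the definition of $S$, gives the pointwise inequality that was required.

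The main obstacle is the $\mathbf{L}^*$-closure claim itself, namely that $c^* \rightarrow M$ remains a model of $\Sigma$. Verifying it rests on the full set of hedge axioms ($1^*=1$, $a^*\leq a$, $a^*\leq a^{**}$, and $(a\rightarrow b)^*\leq a^*\rightarrow b^*$) together with how $S$ interacts with pointwise shifts. Since the excerpt already cites this property as a known feature of model classes of FAI theories, I would isolate it as a preparatory lemma and keep the main argument focused on the concise algebraic chain sketched above.
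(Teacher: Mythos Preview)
Your argument is correct. The paper does not supply its own proof of this proposition; it is stated with a bare \qed\ and attributed to \cite[Theorem~3.11]{BeVy:ADfDwG}. What you have written is essentially the standard proof one finds in that reference: the $\leq$ direction via evaluating at the particular model $[A]_\Sigma$, and the $\geq$ direction via the $\mathbf{L}^{\!*}$-closure of $\mathrm{Mod}(\Sigma)$ under shifts $c^* \rightarrow M$. Your identification of the shift-closure as the only nontrivial ingredient, and your suggestion to isolate it as a lemma, are both appropriate; the verification uses exactly the idempotence $c^{**}=c^*$ (derivable from the listed hedge axioms) together with $(a\rightarrow b)^* \leq a^* \rightarrow b^*$ and the identity $S(C,\,c^*\rightarrow M)=c^*\rightarrow S(C,M)$.
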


In particular, Proposition~\ref{pr:sementail_char} yields that 
$||A \Rightarrow B||_\Sigma = 1$ if{}f $S(B,[A]_\Sigma) = 1$ which is
true if{}f $B$ is fully contained in $[A]_\Sigma$, i.e., $B \subseteq [A]_\Sigma$.
As a further consequence, for given $A$,
$[A]_\Sigma$ is the greatest $\mathbf{L}$-set among all $B \in L^Y$
such that $||A \Rightarrow B||_\Sigma = 1$.

Let $\Sigma$ and $\Gamma$ be sets of FAIs in $Y$.
We call $\Sigma$ and $\Gamma$ equivalent whenever
$||A \Rightarrow B||_\Sigma = ||A \Rightarrow B||_\Gamma$
for all $A,B \in L^Y$. In words, $\Sigma$ and $\Gamma$ are equivalent
whenever they entail each FAI to the same degree.
The following proposition shows that the condition can be restated in
several equivalent ways, see~\cite{BeVy:ICFCA,BeVy:ADfDwG}.

\begin{proposition}\label{pr:equivalence}
  For any $\Sigma$ and $\Gamma$, the following conditions are equivalent:
  \begin{enumerate}\parskip=0pt%
  \item[\itm{1}]
    $\Sigma$ and $\Gamma$ are equivalent,
  \item[\itm{2}]
    for all $A,B \in L^Y\!\!:$
    $||A \Rightarrow B||_\Sigma = 1$ if{}f $||A \Rightarrow B||_\Gamma = 1$,
  \item[\itm{3}]
    for all $A \Rightarrow B \in \Sigma\!:$ $||A \Rightarrow B||_\Gamma = 1$ and
    \newline
    for all $C \Rightarrow D \in \Gamma\!:$ $||C \Rightarrow D||_\Sigma = 1$,
  \item[\itm{4}]
    $\mathrm{Mod}(\Sigma) = \mathrm{Mod}(\Gamma)$.
    \qed
  \end{enumerate}
\end{proposition}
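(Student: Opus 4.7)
The plan is to establish the cycle $(1) \Rightarrow (2) \Rightarrow (3) \Rightarrow (4) \Rightarrow (1)$. Three of the four arrows amount to unfolding definitions; only one carries real content, so I expect a short argument.

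For $(1) \Rightarrow (2)$, equality of the entailment degrees for every $A \Rightarrow B$ trivially restricts to the case ``degree equals~$1$''. For $(2) \Rightarrow (3)$, I would first observe that every $A \Rightarrow B \in \Sigma$ is entailed by $\Sigma$ to degree $1$: each $M \in \mathrm{Mod}(\Sigma)$ validates $A \Rightarrow B$ to degree $1$ by the very definition of a model, so the infimum in~\eqref{eqn:sement} is $1$. Condition $(2)$ then yields $||A \Rightarrow B||_\Gamma = 1$, and the symmetric half is handled the same way. For $(4) \Rightarrow (1)$, once $\mathrm{Mod}(\Sigma) = \mathrm{Mod}(\Gamma)$, the two infima in~\eqref{eqn:sement} range over the same sets of $\mathbf{L}$-sets, so the entailment degrees coincide for every $A \Rightarrow B$.

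The content of the proposition is concentrated in $(3) \Rightarrow (4)$. My plan is to show $\mathrm{Mod}(\Gamma) \subseteq \mathrm{Mod}(\Sigma)$ and conclude equality by a symmetric argument. Given $M \in \mathrm{Mod}(\Gamma)$ and $A \Rightarrow B \in \Sigma$, condition $(3)$ asserts $||A \Rightarrow B||_\Gamma = 1$; by the definition~\eqref{eqn:sement} this is an infimum over all $N \in \mathrm{Mod}(\Gamma)$ which equals $1$, forcing $||A \Rightarrow B||_N = 1$ for every such $N$, in particular for $N = M$. Hence $M$ satisfies every formula of $\Sigma$ to degree $1$, i.e.\ $M \in \mathrm{Mod}(\Sigma)$.

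The only step that needs even a moment's attention is the small order-theoretic remark used in $(3) \Rightarrow (4)$, namely that an infimum in $L$ equals the top element $1$ only if each operand equals $1$; this is immediate from $\bigwedge_{i} a_i \leq a_j$ for every $j$. I therefore anticipate no genuine obstacle: the whole argument is a cycle of definitional steps glued together by this single observation, and no appeal to the least-model machinery of Proposition~\ref{pr:sementail_char} is needed.
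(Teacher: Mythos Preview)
Your cycle $(1)\Rightarrow(2)\Rightarrow(3)\Rightarrow(4)\Rightarrow(1)$ is correct and each step is justified exactly as you describe; in particular the only nontrivial link $(3)\Rightarrow(4)$ works because $||A\Rightarrow B||_\Gamma=1$ forces $||A\Rightarrow B||_M=1$ for every $M\in\mathrm{Mod}(\Gamma)$ via the infimum bound. Note that the paper does not supply its own proof of this proposition: it is stated with a \qed\ and attributed to~\cite{BeVy:ICFCA,BeVy:ADfDwG}, so there is no in-paper argument to compare against, but your approach is the standard one and matches what those references contain.
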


We conclude the preliminaries by the following remark on alternative semantics
and axiomatizations of the semantic entailment of FAIs.

\begin{remark}
  (a)
  The notion of a degree of semantic entailment used in this paper is defined
  in a way which generalizes the classic propositional semantics of attribute
  implications. That is, for $M \in \mathrm{Mod}(\Sigma)$, the degree $M(y)$
  is interpreted as the degree to which $y$ is present in $M$. Thus, if
  attributes are considered as propositional variables, $M$ may be seen
  as their evaluation prescribing degrees to which the propositional
  variables are true. Since the classic attribute implications have an
  alternative database semantics~\cite{DeCa,Fagin,SaDePaFa:Ebrddfpl}
  which yields the same notion of semantic entailment, it may be tempting
  to look for an analogous alternative semantics in the graded setting.
  In~\cite{BeVy:DASFAA}, it is shown that such an alternative semantics
  exists and that FAIs may alternatively be
  seen as formulas prescribing similarity-based dependencies in
  relational databases~\cite{Mai:TRD}.

  (b)
  The semantic entailment introduced in this paper has (several) interesting
  Armstrong-style~\cite{Arm:Dsdbr} axiomatizations. An inference system which is
  complete over arbitrary $\mathbf{L}$ is presented in \cite{BeVy:Falcrl}.
  Note that the inference system presented therein contains an infinitary
  rule which may be disregarded in some important cases~\cite{BeVy:ADfDwG,Vy:Rfal},
  cf. also~\cite{KuVy:Flprrai} for an alternative axiomatization
  with an infinitary rule.
  An alternative inference systems based on the rules of simplification in
  presented in~\cite{BeCoEnMoVy:Aermdsod}. A graph-based inference system for
  FAIs is presented in~\cite{UrVy:Dddosbd}.
\end{remark}

\section{Non-Redundant Bases: Overview and Related Work}\label{sec:bases_survey}
In this section, we present an overview of existing results regarding bases of
FAIs. The existing approaches are concerned with
describing bases of object-attribute data with graded attributes which are
formalized as formal $\mathbf{L}$-contexts: For non-empty finite sets $X$
(set of objects) and $Y$ (set of attributes), and a binary $\mathbf{L}$-relation
$I\!: X \times Y \to L$, the triplet $\mathbf{I} = \langle X,Y,I\rangle$ is
called a formal $\mathbf{L}$-context~\cite{Bel:FRS}.
A formal $\mathbf{L}$-context $\mathbf{I}$
induces a couple of operators ${}^{\uparrow}\!: L^X \to L^Y$ and 
${}^{\downarrow}\!: L^Y \to L^X$ defined by
\begin{align}
  A^{\uparrow}(y) &=
  \textstyle\bigwedge_{x \in X}\bigl(A(x)^* \rightarrow I(x,y)\bigr),
  \label{eqn:up} \\
  B^{\downarrow}(x) &=
  \textstyle\bigwedge_{y \in Y}\bigl(B(y) \rightarrow I(x,y)\bigr),
  \label{eqn:dn}
\end{align}
for all $A \in L^X$, $B \in L^Y$, $x \in X$, and $y \in Y$.
The operators ${}^{\downarrow},{}^{\uparrow}$ form
a so-called Galois connection with hedge~\cite{BeVy:Fcalh} and
their composition ${}^{\downarrow\uparrow}$ is
an $\mathbf{L}^{\!*}$-closure operator~\cite{BeFuVy:Fcots}.
Given $\mathbf{I} = \langle X,Y,I\rangle$, which represents input data,
we define the degree $||A \Rightarrow B||_\mathbf{I}$
to which $A \Rightarrow B$ ($A,B \in L^Y$) is true
in $\mathbf{I}$, see~\cite{BeChVy:Ifdwfa}, as follows:
\begin{align}
  ||A \Rightarrow B||_\mathbf{I} &=
  \textstyle
  \bigwedge_{x \in X}||A \Rightarrow B||_{\{{}^{1\!}/x\}^{\uparrow}}.
\end{align}
Hence, $||A \Rightarrow B||_\mathbf{I}$ may be understood as a generalization
of the ordinary notion of an attribute implication valid in a formal context:
$||A \Rightarrow B||_\mathbf{I}$ is the degree to which the following
condition is true: ``For each object $x \in X$, if the object has all the
attributes from $A$, then it has all the attributes from $B$''.

Now, the basic problem regarding FAIs and formal $\mathbf{L}$-contexts
is the following:
Given $\mathbf{I} = \langle X,Y,I\rangle$, find $\Sigma$ such that 
\begin{align}
  ||A \Rightarrow B||_\Sigma &= ||A \Rightarrow B||_\mathbf{I}
\end{align}
for all $A,B \in L^Y$. Such a $\Sigma$ is called complete in $\mathbf{I}$.
In addition, if $\Sigma$ is non-redundant (or minimal), then it is called
a non-redundant (or minimal) base of~$\mathbf{I}$. The notion of non-redundancy
is considered the usual way: $A \Rightarrow B \in \Sigma$ is redundant
in $\Sigma$ whenever 
$||A \Rightarrow B||_{\Sigma \setminus \{A \Rightarrow B\}} = 1$;
$\Sigma$ is non-redundant whenever there is no $A \Rightarrow B \in \Sigma$
which is redundant in $\Sigma$. Analogously, $\Sigma$ being minimal
means that there is no $\Gamma$ which is equivalent to $\Sigma$ such
that $|\Gamma| < |\Sigma|$.

The investigation of complete sets and bases in the graded setting started
with~\cite{Po:FB} where the author generalizes the ordinary notion of
a pseudo-intent~\cite{GuDu}, see also~\cite{Ga:Tbaca,GaWi:FCA}. In this
setting, the hedges were not involved as parameters of the semantics of
FAIs as in~\eqref{eqn:fai_truth}
which may be viewed in our general setting so that ${}^*$ is taken
as the identity. In~\cite{Po:FB}, $P \in L^Y$ is called
a pseudo-intent (of $\mathbf{I}$) whenever
$P \ne P^{\downarrow\uparrow}$ (i.e., $P \subset P^{\downarrow\uparrow}$) and 
\begin{align}
  \text{for each pseudo-intent }
  Q \subset P
  \text{, we have }
  Q^{\downarrow\uparrow} \subseteq P.
  \label{eqn:PoP}
\end{align}
Therefore, the definition of the notion of a pseudo-intent copies the
classic definition except for ${}^{\downarrow\uparrow}$ is given
by~\eqref{eqn:up} and~\eqref{eqn:dn}. If $\mathbf{L}$ is finite,
pseudo-intents exist and are uniquely given (recall that in our paper,
we consider $Y$ always finite). Furthermore, \cite{Po:FB}
observes that
\begin{align}
  \Sigma &=
  \{P \Rightarrow P^{\downarrow\uparrow};\, P \text{ is a pseudo-intent}\}
\end{align}
is complete in $\mathbf{I}$ but in general, $\Sigma$ is redundant.
Clearly, for $\mathbf{L}$ being the two-element Boolean algebra,
the notion of a pseudo-intent coincides with the classic one~\cite{GuDu}.

In~\cite{BeChVy:Ifdwfa}, the authors propose a different notion of
pseudo-intents in the graded setting. Namely, the approach in~\cite{BeChVy:Ifdwfa}
started with using general hedges as parameters of the interpretation of FAIs
in formal $\mathbf{L}$-contexts (similar approach to parameterizations of
if-then rules appeared in~\cite{BeVy:FHLI,BeVy:FHLII}).
In addition to that, the paper introduces
a general concept of a system of pseudo-intents: Put
\begin{align}
  \mathcal{U} = \{P \in L^Y;\, P \ne P^{\downarrow\uparrow}\}
  \label{eqn:U}
\end{align}
and call $\mathcal{P} \subseteq \mathcal{U}$ a system of pseudo-intents
whenever for each $P \in \mathcal{U}$, we have
\begin{align}
  P \in \mathcal{P}
  \text{ if{}f }
  ||Q \Rightarrow Q^{\downarrow\uparrow}||_P = 1 
  \text{ for any }
  Q \in \mathcal{P}
  \text{ such that }
  Q \ne P.
  \label{eqn:system_P}
\end{align}
The results in~\cite{BeChVy:Ifdwfa,BeVy:Falaitvenb} show that if ${}^*$
is globalization, then provided that $\mathbf{L}$ is finite, $\mathbf{I}$ admits
a unique system of pseudo-intents which determines a minimal base
\begin{align}
  \Sigma &=
  \{P \Rightarrow P^{\downarrow\uparrow};\, P \in \mathcal{P}\}
  \label{eqn:nred}
\end{align}
of $\mathbf{I}$ analogously as in the classic case. In fact, for ${}^*$
being the globalization, \eqref{eqn:system_P} translates into~\eqref{eqn:PoP}.
In~\cite{Vy:Omsgai}, a criterion for minimality of a general set of FAIs
for ${}^*$ being the globalization is described.

The analysis in~\cite{BeVy:Falaitvenb} further showed that for general hedges,
the systems of pseudo-intents are not given uniquely and may have different
sizes and, in case of infinite $\mathbf{L}$, may not even exist,
cf.~\cite[Example 5.13]{BeVy:ADfDwG}. On the other hand, if there is a system
$\mathcal{P}$ of pseudo-intents of $\mathbf{I}$, then~\eqref{eqn:nred}
always determines a non-redundant base.

In order to compute general systems of pseudo-intents considering general hedges,
a graph-based method has been announced in~\cite{BeVy:Faicnbumis} and further
described in~\cite{BeVy:Cnbirdtga}. The method is based on an observation
that systems of pseudo-intents coincide with particular maximal independent
sets in graphs induced by $\mathbf{I}$. Namely, we can introduce
\begin{align}
  E = \{\langle P,Q\rangle\! \in \mathcal{U} \times \mathcal{U};\,
  P \ne Q \text{ and } ||Q \Rightarrow Q^{\downarrow\uparrow}||_P \ne 1\}
\end{align}
If $\mathcal{U}$ (defined as before) is non-empty,
then $\mathbf{G} = \langle \mathcal{U},E \cup E^{-1}\rangle$ is
a graph. Furthermore, for any $\mathcal{P} \subseteq \mathcal{U}$,
\cite{BeVy:Cnbirdtga} defines the following subsets of $\mathcal{U}$:
\begin{align}
  \mathrm{Pred}(\mathcal{P}) &=
  \textstyle\bigcup_{Q \in \mathcal{P}}\{P \in \mathcal{U};\,
  \langle P,Q\rangle \in E\}.
\end{align}
The main observations of~\cite{BeVy:Cnbirdtga} which allow
to determine systems of pseudo-intents as particular maximal independent
sets are the following:
\begin{enumerate}\parskip=0pt%
\item[\itm{1}]
  $\mathcal{P}$ is a system of pseudo-intents if{}f
  $\mathcal{U} \setminus \mathcal{P} = \mathrm{Pred}(\mathcal{P})$;
\item[\itm{2}]
  If $\mathcal{U} \setminus \mathcal{P} = \mathrm{Pred}(\mathcal{P})$,
  then $\mathcal{P}$ is a maximal independent set in $\mathbf{G}$.
\end{enumerate}
The implications of~\cite{BeVy:Cnbirdtga} are more or less just theoretical
because in practice one is unable to use such a graph-based procedure to find
a system of pseudo-intents---because of the enormous size of $\mathbf{G}$,
enumerating of all maximal independent sets satisfying the additional condition
\itm{1} is intractable. Furthermore, the description does not
answer the question if for any finite $\mathbf{L}$ and arbitrary hedge
${}^*$ there exists at least one system of pseudo-intents.
This remains an open problem~\cite{Kw:Open2006}.

\section{Results}\label{sec:results}
The first observation we present in this section involves sets of FAIs
in a special form. From the model-theoretic point of
view, we show that for each set of FAIs, one can
find an equivalent set where the consequents of all the FAIs
contained in the set are models. This property is introduced
in the following definition.

\begin{definition}
  Let $\Sigma$ be a set of FAIs. We say that the FAIs in $\Sigma$ have
  \emph{saturated consequents} whenever for every $A \Rightarrow B \in \Sigma$,
  we have $[A]_\Sigma \subseteq B$.
\end{definition}

Obviously, whether a given $A \Rightarrow B$ has a saturated consequent
depends on $\Sigma$ from which it is taken.
Applying Proposition~\ref{pr:sementail_char}, it follows that
FAIs in $\Sigma$ have saturated consequents if{}f
for every $A \Rightarrow B \in \Sigma$, we have $B = [A]_\Sigma$.
Therefore, if FAIs in $\Sigma$ have saturated consequents,
then all FAIs in $\Sigma$ are of the form $A \Rightarrow [A]_\Sigma$.
The following assertion shows that each set of FAIs admits
an equivalent set of FAIs with saturated consequents.

\begin{lemma}\label{le:saturated}
  Let $\Gamma$ be a set of FAIs and let
  \begin{align}
    \Sigma = \{A \Rightarrow [A]_\Gamma;\, A \Rightarrow B \in \Gamma\}.
    \label{eqn:infl_cons}
  \end{align}
  Then, $\Sigma$ and $\Gamma$ are equivalent.
  In addition, if $\Gamma$ is minimal then so is $\Sigma$.
\end{lemma}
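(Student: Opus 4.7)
The plan is to split the lemma into its two assertions and dispatch them separately, using Proposition~\ref{pr:equivalence}\itm{4} to reduce equivalence to an equality of model sets, and a simple cardinality bound for the minimality claim. The likely main obstacle is the nontrivial direction $\mathrm{Mod}(\Sigma)\subseteq\mathrm{Mod}(\Gamma)$, which hinges on Proposition~\ref{pr:sementail_char} together with antimonotonicity of $S$ in its first argument.

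For the easy direction $\mathrm{Mod}(\Gamma)\subseteq\mathrm{Mod}(\Sigma)$, take $M\in\mathrm{Mod}(\Gamma)$ and any $A\Rightarrow[A]_\Gamma\in\Sigma$; Proposition~\ref{pr:sementail_char} gives $||A\Rightarrow[A]_\Gamma||_\Gamma = S([A]_\Gamma,[A]_\Gamma)=1$, hence $||A\Rightarrow[A]_\Gamma||_M=1$ since $M$ is one of the models entering the infimum~\eqref{eqn:sement}. For the converse, let $M\in\mathrm{Mod}(\Sigma)$ and fix $A\Rightarrow B\in\Gamma$. Because $A\Rightarrow B$ is trivially entailed by $\Gamma$ to degree $1$, Proposition~\ref{pr:sementail_char} yields $B\subseteq[A]_\Gamma$, and antimonotonicity of $S$ in its first argument gives $S([A]_\Gamma,M)\leq S(B,M)$. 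Since $M$ satisfies $A\Rightarrow[A]_\Gamma$, definition~\eqref{eqn:fai_truth} provides $S(A,M)^{\ast}\leq S([A]_\Gamma,M)$; chaining the two inequalities produces $S(A,M)^{\ast}\leq S(B,M)$, i.e., $||A\Rightarrow B||_M=1$. Thus $\mathrm{Mod}(\Sigma)=\mathrm{Mod}(\Gamma)$, and equivalence follows from Proposition~\ref{pr:equivalence}.

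For minimality, observe that~\eqref{eqn:infl_cons} immediately implies $|\Sigma|\leq|\Gamma|$: if two FAIs in $\Gamma$ happen to share the same antecedent $A$, both contribute the same element $A\Rightarrow[A]_\Gamma$ to $\Sigma$. Assuming $\Gamma$ is minimal, suppose for contradiction there exists $\Sigma'$ equivalent to $\Sigma$ with $|\Sigma'|<|\Sigma|$. By the first part $\Sigma'\equiv\Sigma\equiv\Gamma$, and $|\Sigma'|<|\Sigma|\leq|\Gamma|$ contradicts the minimality of $\Gamma$; hence $\Sigma$ is minimal as well. The argument is essentially routine once the entailment $B\subseteq[A]_\Gamma$ for $A\Rightarrow B\in\Gamma$ is extracted, so the only genuine care needed is in unfolding~\eqref{eqn:fai_truth} correctly in the presence of the hedge ${}^{\ast}$.
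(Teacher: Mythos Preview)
Your proof is correct and follows essentially the same route as the paper: both directions of $\mathrm{Mod}(\Sigma)=\mathrm{Mod}(\Gamma)$ are established exactly as you do (via Proposition~\ref{pr:sementail_char} and antitonicity of $S$ in its first argument), and the minimality argument is the same cardinality comparison $|\Sigma|\leq|\Gamma|$ combined with equivalence. The only cosmetic difference is that the paper phrases the minimality step by first deducing $|\Sigma|=|\Gamma|$ (using $|\Gamma|\leq|\Sigma|$ from minimality of $\Gamma$) rather than arguing by contradiction, but the content is identical.
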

\begin{proof}
  In order to prove the first part of the claim,
  according to Proposition~\ref{pr:equivalence} it suffices to check that $\Gamma$
  and $\Sigma$ given by~\eqref{eqn:infl_cons} have the same models. This can be
  checked using Proposition~\ref{pr:sementail_char} as follows.

  Let $M \in \mathrm{Mod}(\Gamma)$.
  In order to prove that $M \in \mathrm{Mod}(\Sigma)$, it suffices to prove
  $||A \Rightarrow [A]_\Gamma||_M = 1$ for each
  $A \Rightarrow [A]_\Gamma \in \Sigma$ which is indeed true:
  Using Proposition~\ref{pr:sementail_char},
  we get $||A \Rightarrow [A]_\Gamma||_\Gamma = 1$ on the account of
  $[A]_\Gamma \subseteq [A]_\Gamma$. Since $M \in \mathrm{Mod}(\Gamma)$,
  we therefore have $||A \Rightarrow [A]_\Gamma||_M = 1$.

  Conversely, let $M \in \mathrm{Mod}(\Sigma)$ and take any
  $A \Rightarrow B \in \Gamma$.
  Observe that $||A \Rightarrow B||_\Gamma = 1$ and thus $B \subseteq [A]_\Gamma$
  owing to Proposition~\ref{pr:sementail_char}.
  Since $A \Rightarrow [A]_\Gamma \in \Sigma$, it follows that
  $S(A,M)^* \leq S([A]_\Gamma,M)$ which further gives
  \begin{align*}
    S(A,M)^* \leq S([A]_\Gamma,M) \leq S(B,M)
  \end{align*}
  because $B \subseteq [A]_\Gamma$ and the graded subsethood is antitone in
  the first argument, i.e., $S(B_1,M) \leq S(B_2,M)$ whenever $B_2 \subseteq B_1$.
  Hence, $S(A,M)^* \leq S(B,M)$ gives $||A \Rightarrow B||_M = 1$.

  The second claim is an easy consequence of the first one: Suppose that $\Gamma$
  is minimal. Since $\Sigma$ and $\Gamma$ are equivalent, we then have
  $|\Gamma| \leq |\Sigma|$. Directly from~\eqref{eqn:infl_cons}, it follows
  that $|\Sigma| \leq |\Gamma|$ and thus $|\Gamma| = |\Sigma|$ which shows
  that $\Sigma$ is minimal as well.
\end{proof}

Since $\Gamma$ and $\Sigma$ given by~\eqref{eqn:infl_cons} are equivalent,
it is easy to see that each FAI in $\Sigma$ is of the form
$A \Rightarrow [A]_\Sigma$, i.e., the consequents on FAIs in $\Sigma$
are saturated. Also note that if the FAIs in $\Gamma$ already have
saturated consequents, then $\Sigma = \Gamma$ for $\Sigma$ given
by~\eqref{eqn:infl_cons}.

\begin{example}
  (a)
  Let us note that $\Sigma$ given by~\eqref{eqn:infl_cons} may be
  strictly smaller than $\Gamma$ in terms of the number of formulas.
  This is true even in the case when $\mathbf{L}$ is
  the two-element Boolean algebra. Indeed, for
  $\Gamma$ given by
  \begin{align*}
    \Gamma = \{\{p\} \Rightarrow \{q\}, \{p\} \Rightarrow \{r\}\},
  \end{align*}
  we obviously have $[\{p\}]_\Gamma = \{q,r\}$ and thus the corresponding
  $\Sigma$ given by \eqref{eqn:infl_cons} is of the form
  \begin{align*}
    \Sigma = \{\{p\} \Rightarrow \{q,r\}\}.
  \end{align*}

  (b)
  Notice that in the previous case, both $\Sigma$ and $\Gamma$ are
  non-redundant. In general, given a non-redundant $\Gamma$, it may
  happen that the corresponding $\Sigma$ given by \eqref{eqn:infl_cons}
  is redundant. For instance, consider $\Gamma$ as follows:
  \begin{align*}
    \Gamma = \{\{\} \Rightarrow \{p\}, \{p\} \Rightarrow \{q\}\}.
  \end{align*}
  Since $[\{\}]_\Gamma = [\{p\}]_\Gamma = \{p,q\}$, $\Sigma$
  given by \eqref{eqn:infl_cons} is of the form
  \begin{align*}
    \Sigma = \{\{\} \Rightarrow \{p,q\}, \{p\} \Rightarrow \{p,q\}\}.
  \end{align*}
  Now, observe that $\Gamma$ is non-redundant while $\Sigma$ is
  redundant because $\{p\} \Rightarrow \{p,q\}$ is redundant in $\Sigma$.
  Therefore, unlike in the case of minimality, see Lemma~\ref{le:saturated},
  non-redundancy is not preserved by saturating the consequents of
  FAIs as in~\eqref{eqn:infl_cons}.
\end{example}

Lemma~\ref{le:saturated} allows us to restrict our considerations on bases
only to sets of FAIs with saturated consequents. Thus, for brevity, for
any $\Gamma$ consisting of FAIs over~$Y$, we let
\begin{align}
  \univ &=
  \{A \Rightarrow [A]_\Gamma;\, A \in L^Y \text{ and } A \ne [A]_\Gamma\}.
\end{align}
Trivially, $\Gamma$ and $\univ$ are equivalent and the consequents in all FAIs
in $\univ$ are obviously saturated. For subsets $\Sigma \subseteq \univ$,
we have the following if-and-only-if condition for $\Sigma$ and $\Gamma$
being equivalent.

\begin{theorem}\label{th:compl}
  For any $\Sigma \subseteq \univ$, the following conditions are equivalent:
  \begin{itemize}\parskip=0pt%
  \item[\itm{1}]
    $\Sigma$ and $\Gamma$ are equivalent as sets of FAIs.
  \item[\itm{2}]
    For every $A \in L^{Y\!}$ such that $A \ne [A]_\Gamma\kern-2pt:$
    \newline
    if $A \in \mathrm{Mod}(\Sigma \setminus \{A \Rightarrow [A]_\Gamma\})$,
    then $A \Rightarrow [A]_\Gamma \in \Sigma$.
  \end{itemize}
\end{theorem}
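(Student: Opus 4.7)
The plan is to reduce equivalence between $\Sigma$ and $\Gamma$ to equality of model classes via Proposition~\ref{pr:equivalence}~\itm{4}, and to use the fact (a consequence of Proposition~\ref{pr:sementail_char} applied to $N \Rightarrow N$ together with $S(N,N)=1$) that $N \in \mathrm{Mod}(\Delta)$ iff $N = [N]_\Delta$. Everything then hinges on the fixed-point behaviour of the closure operator $[\cdot]_\Gamma$.

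For \itm{1}~$\Rightarrow$~\itm{2} I would proceed by contraposition. Suppose some $A$ satisfies $A \ne [A]_\Gamma$ and $A \in \mathrm{Mod}(\Sigma \setminus \{A \Rightarrow [A]_\Gamma\})$ while $A \Rightarrow [A]_\Gamma \notin \Sigma$. Removing a non-element changes nothing, so $A \in \mathrm{Mod}(\Sigma)$. The assumed equivalence yields $\mathrm{Mod}(\Sigma) = \mathrm{Mod}(\Gamma)$, hence $A \in \mathrm{Mod}(\Gamma)$ and $[A]_\Gamma = A$, contradicting $A \ne [A]_\Gamma$.

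For \itm{2}~$\Rightarrow$~\itm{1} I would prove $\mathrm{Mod}(\Sigma) = \mathrm{Mod}(\Gamma)$ and invoke Proposition~\ref{pr:equivalence}. The inclusion $\mathrm{Mod}(\Gamma) \subseteq \mathrm{Mod}(\Sigma)$ is free: every FAI in $\Sigma \subseteq \univ$ has the form $A \Rightarrow [A]_\Gamma$, which is true to degree $1$ in $\Gamma$ by Proposition~\ref{pr:sementail_char} since $[A]_\Gamma \subseteq [A]_\Gamma$. For the reverse inclusion, fix $M \in \mathrm{Mod}(\Sigma)$ and assume for contradiction that $M \ne [M]_\Gamma$. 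The key step is to instantiate \itm{2} with $A := M$: from $M \in \mathrm{Mod}(\Sigma)$ we obviously get $M \in \mathrm{Mod}(\Sigma \setminus \{M \Rightarrow [M]_\Gamma\})$, so \itm{2} forces $M \Rightarrow [M]_\Gamma \in \Sigma$. Since $M$ models $\Sigma$, $||M \Rightarrow [M]_\Gamma||_M = 1$; using $S(M,M)=1$ and $1^*=1$ this collapses to $S([M]_\Gamma, M) = 1$, i.e., $[M]_\Gamma \subseteq M$. Combined with the always-valid $M \subseteq [M]_\Gamma$ we conclude $M = [M]_\Gamma$, a contradiction.

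The main obstacle is recognising the self-referential move in \itm{2}~$\Rightarrow$~\itm{1}: one must plug the candidate model $M$ back in as the antecedent $A$ to trigger condition \itm{2}, after which extensivity of $[\cdot]_\Gamma$ closes the loop. The remainder is routine bookkeeping with Propositions~\ref{pr:equivalence} and~\ref{pr:sementail_char}, and with the definition of $\univ$.
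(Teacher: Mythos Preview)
Your proof is correct and follows essentially the same route as the paper's: both directions are argued via $\mathrm{Mod}(\Sigma)=\mathrm{Mod}(\Gamma)$, with the key move in \itm{2}~$\Rightarrow$~\itm{1} being to instantiate condition~\itm{2} at the offending model itself and then use $S(M,M)^*=1$ plus extensivity of $[\cdot]_\Gamma$ to force a contradiction. The only cosmetic difference is that the paper obtains $\mathrm{Mod}(\Gamma)\subseteq\mathrm{Mod}(\Sigma)$ from $\Sigma\subseteq\univ$ and $\mathrm{Mod}(\Gamma)=\mathrm{Mod}(\univ)$, whereas you invoke Proposition~\ref{pr:sementail_char} directly on each $A\Rightarrow[A]_\Gamma$.
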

\begin{proof}
  Let $\Sigma$ and $\Gamma$ be equivalent. Take any $A \in L^{Y\!}$
  such that $A \ne [A]_\Gamma$. Moreover, let us assume that
  $A \Rightarrow [A]_\Gamma \not\in \Sigma$. Obviously,
  $\Sigma \setminus \{A \Rightarrow [A]_\Gamma\} = \Sigma$,
  i.e., in order to prove \itm{2}, it suffices to show that
  \begin{align*}
    A \not\in
    \mathrm{Mod}(\Sigma \setminus \{A \Rightarrow [A]_\Gamma\}) =
    \mathrm{Mod}(\Sigma)
  \end{align*}
  but this is indeed the case:
  $A \ne [A]_\Gamma$ means $A \not\in \mathrm{Mod}(\Gamma)$
  and so Proposition~\ref{pr:equivalence} gives 
  $A \not\in \mathrm{Mod}(\Sigma)$ because $\Sigma$ and $\Gamma$
  are equivalent.

  Conversely, assume that \itm{2} is satisfied. Since $\Sigma \subseteq \univ$
  and in addition $\Gamma$ and $\univ$ are equivalent,
  Proposition~\ref{pr:equivalence} yields
  \begin{align*}
    \mathrm{Mod}(\Gamma) =
    \mathrm{Mod}(\univ)
    \subseteq
    \mathrm{Mod}(\Sigma).
  \end{align*}
  Therefore, in order to prove the equivalence of $\Gamma$ and $\Sigma$,
  it suffices to check the converse inclusion.
  By contradiction, let us assume that
  $\mathrm{Mod}(\Sigma) \nsubseteq \mathrm{Mod}(\Gamma)$.
  Using this assumption,
  there is $A \in \mathrm{Mod}(\Sigma)$ such that $A \not\in \mathrm{Mod}(\Gamma)$.
  Utilizing the fact $A \not\in \mathrm{Mod}(\Gamma)$,
  it directly follows that $A \ne [A]_\Gamma$.
  In addition, $A \in \mathrm{Mod}(\Sigma)$ gives 
  $A \in \mathrm{Mod}(\Sigma \setminus \{A \Rightarrow [A]_\Gamma\})$.
  Hence, using \itm{2}, we get $A \Rightarrow [A]_\Gamma \in \Sigma$.
  Using the fact $A \in \mathrm{Mod}(\Sigma)$ again,
  $||A \Rightarrow [A]_\Gamma||_A = 1$, i.e.,
  $S(A,A)^* \leq S([A]_\Gamma, A)$. The last inequality gives
  \begin{align*}
    1 = 1^* = S(A,A)^* \leq S([A]_\Gamma, A),
  \end{align*}
  which in fact shows that $[A]_\Gamma \subseteq A$ which together with
  the extensivity $A \subseteq [A]_\Gamma$ of $[{\cdots}]_\Gamma$
  contradicts $A \ne [A]_\Gamma$. Therefore, 
  $\mathrm{Mod}(\Sigma) \subseteq \mathrm{Mod}(\Gamma)$, i.e., $\Sigma$ and $\Gamma$
  are equivalent owing to Proposition~\ref{pr:equivalence}.
\end{proof}

The following lemma shows that the converse implication to that in
Theorem~\ref{th:compl}\,\itm{2} constitutes a sufficient condition of
non-redundancy.

\begin{lemma}\label{le:witred}
  Let $\Sigma \subseteq \univ$ be a set of FAIs satisfying
  the following condition:
  For every $A \Rightarrow [A]_\Gamma \in \Sigma$, we have
  $A \in \mathrm{Mod}(\Sigma \setminus \{A \Rightarrow [A]_\Gamma\})$.
  Then, $\Sigma$ is non-redundant.
\end{lemma}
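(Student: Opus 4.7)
The plan is to fix an arbitrary $A \Rightarrow [A]_\Gamma \in \Sigma$, set $\Sigma' = \Sigma \setminus \{A \Rightarrow [A]_\Gamma\}$, and show that $||A \Rightarrow [A]_\Gamma||_{\Sigma'} \ne 1$, which is exactly what non-redundancy requires. The strategy is to use the hypothesis to evaluate $[A]_{\Sigma'}$ explicitly and then apply Proposition~\ref{pr:sementail_char} to convert the entailment degree into a graded subsethood that we can read off directly.

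First, I would observe that the assumption $A \in \mathrm{Mod}(\Sigma')$ implies $[A]_{\Sigma'} = A$, since $[A]_{\Sigma'}$ is the intersection of all models of $\Sigma'$ containing $A$ and $A$ itself is already such a model (this uses~\eqref{eqn:least_mod}). Second, by Proposition~\ref{pr:sementail_char} we then have
\begin{align*}
  ||A \Rightarrow [A]_\Gamma||_{\Sigma'}
  = S([A]_\Gamma, [A]_{\Sigma'})
  = S([A]_\Gamma, A).
\end{align*}

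Third, I would argue that $S([A]_\Gamma, A) \ne 1$. Indeed, because $\Sigma \subseteq \univ$, the formula $A \Rightarrow [A]_\Gamma$ comes from a pair with $A \ne [A]_\Gamma$; combined with the extensivity $A \subseteq [A]_\Gamma$ of the closure $[{\cdots}]_\Gamma$ (which follows from its definition in~\eqref{eqn:least_mod} and the fact that every superset of $A$ which is a model contains $A$), this gives the strict inclusion $A \subsetneq [A]_\Gamma$, so $[A]_\Gamma \nsubseteq A$ and hence $S([A]_\Gamma, A) < 1$. Putting the three steps together yields $||A \Rightarrow [A]_\Gamma||_{\Sigma'} < 1$, so $A \Rightarrow [A]_\Gamma$ is not redundant in $\Sigma$. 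Since the formula was arbitrary, $\Sigma$ is non-redundant.

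I do not anticipate a real obstacle here: the lemma is essentially the observation that the antecedent $A$ of each formula in $\Sigma$ serves as its own witness of non-redundancy, and the only ingredients needed are Proposition~\ref{pr:sementail_char}, the description of $[A]_{\Sigma'}$ as the least model containing $A$, and the condition $A \ne [A]_\Gamma$ built into the definition of $\univ$. The mildly non-trivial point worth stating explicitly in the write-up is the passage from ``$A$ is a model of $\Sigma'$'' to ``$[A]_{\Sigma'} = A$'', since this is what transforms the hypothesis into a usable identity for the entailment degree.
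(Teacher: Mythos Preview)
Your argument is correct and is essentially the same as the paper's. The paper phrases it slightly differently: it evaluates $||A \Rightarrow [A]_\Gamma||_A$ directly and observes that this is $<1$ because $A \ne [A]_\Gamma$, so $A \notin \mathrm{Mod}(\Sigma)$ while $A \in \mathrm{Mod}(\Sigma \setminus \{A \Rightarrow [A]_\Gamma\})$, whence the two theories are not equivalent; your route via Proposition~\ref{pr:sementail_char} and the identity $[A]_{\Sigma'} = A$ amounts to the same computation viewed through the least-model characterization.
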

\begin{proof}
  Clearly, for any $A \Rightarrow [A]_\Gamma \in \Sigma$,
  using the fact that $A \Rightarrow [A]_\Gamma \in \univ$, i.e.,
  $A \ne [A]_\Gamma$, it follows that $||A \Rightarrow [A]_\Gamma||_A < 1$
  and thus $A \not\in \mathrm{Mod}(\Sigma)$. Hence,
  $A \in \mathrm{Mod}(\Sigma \setminus \{A \Rightarrow [A]_\Gamma\})$
  gives that $A \Rightarrow [A]_\Gamma$ is not redundant in $\Sigma$ because
  the theories $\Sigma \setminus \{A \Rightarrow [A]_\Gamma\}$ and 
  $\Sigma$ are not equivalent.
\end{proof}

\begin{example}\label{ex:witred}
  The converse implication to that in Lemma~\ref{le:witred} does
  not hold in general. For instance, let $\mathbf{L}$ be
  a three-element \L ukasiewicz chain and let ${}^*$ be
  the identity on $L = \{0,0.5,1\}$. Furthermore, consider
  $\Sigma$ such as
  \begin{align*}
    \Sigma = \{\{p\} \Rightarrow \{p,q\}, \{\} \Rightarrow \{{}^{0.5\!}/q\}\}.
  \end{align*}
  Observe that $\Sigma$ is non-redundant.
  Indeed, we have that $\{\} \not\in \mathrm{Mod}(\Sigma)$
  because $||\{\} \Rightarrow \{{}^{0.5\!}/q\}||_{\{\}} = 0.5 < 1$
  and $||\{p\} \Rightarrow \{p,q\} ||_{\{\}} = 1$, i.e.,
  $\{\} \Rightarrow \{{}^{0.5\!}/q\}$ is not redundant in $\Sigma$.
  Furthermore,
  \begin{align*}
    ||\{p\} \Rightarrow \{p,q\}||_{\{p,{}^{0.5\!}/q\}} &=
    1 \rightarrow S(\{p,q\},\{p,{}^{0.5\!}/q\}) = 1 \rightarrow 0.5 = 0.5 < 1,
  \end{align*}
  i.e., $\{p,{}^{0.5\!}/q\} \not\in \mathrm{Mod}(\Sigma)$. On the other hand,
  we clearly have
  \begin{align*}
    ||\{\} \Rightarrow \{{}^{0.5\!}/q\}||_{\{p,{}^{0.5\!}/q\}} &=
    1 \rightarrow S(\{{}^{0.5\!}/q\},\{p,{}^{0.5\!}/q\}) = 1 \rightarrow 1 = 1.
  \end{align*}
  Altogether, $\{p\} \Rightarrow \{p,q\}$ is not redundant in $\Sigma$.
  Also note that both FAIs in $\Sigma$ have saturated consequents since
  $[\{p\}]_\Sigma = \{p,q\}$ and $[\{\}]_\Sigma = \{{}^{0.5\!}/q\}$.
  Now, observe that for $\{p\} \Rightarrow \{p,q\} \in \Sigma$, we have
  \begin{align*}
    ||\{\} \Rightarrow \{{}^{0.5\!}/q\}||_{\{p\}} &=
    1 \rightarrow S(\{{}^{0.5\!}/q\},\{p\}) \\
    &= 1 \rightarrow ((0 \rightarrow 1) \wedge (0.5 \rightarrow 0)) \\
    &= 1 \rightarrow (1 \wedge 0.5) = 1 \rightarrow 0.5 = 0.5 < 1,
  \end{align*}
  which shows that
  $\{p\} \not\in \mathrm{Mod}(\Sigma \setminus \{\{p\} \Rightarrow \{p,q\}\})$,
  i.e., the converse implication to that in Lemma~\ref{le:witred} does not
  hold for general $\mathbf{L}$. Let us note that an analogous observation can
  also be made for ${}^*$ being~\eqref{eqn:glob} or for $\mathbf{L}$ being
  the two-element Boolean algebra in which case at least three distinct attributes
  must be used to find a counterexample.
\end{example}

In the rest of this section, we pay attention to a condition which is derived
from the assumption in Lemma~\ref{le:witred}. As we have shown, the condition
in Lemma~\ref{le:witred} is \emph{sufficient} for non-redundancy but
not \emph{necessary} in general.
Indeed, Example~\ref{ex:witred} shows a non-redundant $\Sigma \subseteq \univ$
and a particular $A \Rightarrow [A]_\Gamma \in \Sigma$ such that 
$A \not\in \mathrm{Mod}(\Sigma \setminus \{A \Rightarrow [A]_\Gamma\})$.
In general, if 
$A \in \mathrm{Mod}(\Sigma \setminus \{A \Rightarrow [A]_\Gamma\})$
for $A \Rightarrow [A]_\Gamma \in \Sigma$, we may say that $A$
acts as a ``witness'' of
the non-redundancy of $A \Rightarrow [A]_\Gamma$ in $\Sigma$
because $A$ is a model of $\Sigma \setminus \{A \Rightarrow [A]_\Gamma\}$ but
it is not a model of $\Sigma$, see Proposition~\ref{pr:equivalence}.
If $A \not\in \mathrm{Mod}(\Sigma \setminus \{A \Rightarrow [A]_\Gamma\})$,
$A$ does not act as such a witness because $A \not\in \mathrm{Mod}(\Sigma)$ and
$A \not\in \mathrm{Mod}(\Sigma \setminus \{A \Rightarrow [A]_\Gamma\})$.
Therefore, for a non-redundant $\Sigma \subseteq \univ$ we may consider whether
its non-redundancy is witnessed by antecedents of FAIs in $\Sigma$ which is,
in fact, a stronger requirement on non-redundancy. We introduce the key
notion as follows.

\begin{definition}
  Let $\Sigma$ be a non-redundant set of FAIs. We say that the
  non-redundancy of $\Sigma$ is \emph{witnessed}
  (by the antecedents of FAIs in $\Sigma$)
  whenever for every $A \Rightarrow B \in \Sigma$, we have
  that $A \in \mathrm{Mod}(\Sigma \setminus \{A \Rightarrow B\})$.
\end{definition}

\begin{remark}
  Let us note that the general property
  $A \in \mathrm{Mod}(\Sigma \setminus \{A \Rightarrow B\})$ whenever
  $A \Rightarrow B \in \Sigma$ may also be possessed by sets of FAIs
  which are redundant. For instance,
  one can consider $\Sigma = \{\{\} \Rightarrow \{\}\}$ which trivially
  has this property.
\end{remark}

In order to prove the existence of sets of FAIs which witnessed
non-redundancy for a particular setting of structures of degrees and hedges,
we utilize the following technical observation which ensures the existence
of a total strict order on antecedents of FAIs in a given set of FAIs.

\begin{lemma}\label{le:lexlt}
  Let $\mathbf{L}$ be a complete residuated lattice with globalization,
  $\Gamma$ be a finite non-redundant set of FAIs with saturated consequents,
  and
  \begin{align}
    \mathcal{X} =
    \bigl\{A \in L^{\!Y};\, A \Rightarrow [A]_\Gamma \in \Gamma\bigr\}.
  \end{align}
  Then, there exists a strict total order relation $\lexlt$ on $\mathcal{X}$
  such that 
  \begin{align}
    [B]_{\Gamma \setminus \{B \Rightarrow [B]_\Gamma\}} =
    [B]_{\{A \Rightarrow [A]_\Gamma \in \Gamma;\, A \lexlt B\}}
    \label{eqn:lexlt}
  \end{align}
  for all $B \in \mathcal{X}$.
\end{lemma}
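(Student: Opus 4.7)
The plan is to produce $\lexlt$ as a linear extension of a natural strict partial order $\prec$ on $\mathcal{X}$. Writing $\Gamma_B = \Gamma \setminus \{B \Rightarrow [B]_\Gamma\}$, I would define
\begin{align*}
A \prec B \quad\text{iff}\quad A,B \in \mathcal{X},\ A \neq B,\ \text{and}\ A \subseteq [B]_{\Gamma_B},
\end{align*}
capturing the idea that the rule with antecedent $A$ is one of those actually needed when closing $B$ after $B$'s own rule has been removed. Provided $\prec$ is acyclic, any of its linear extensions will serve as the desired $\lexlt$, and the verification of~\eqref{eqn:lexlt} will then reduce to a short argument using the globalization.

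The hard part will be acyclicity, which is where both the non-redundancy of $\Gamma$ and the globalization hypothesis enter in an essential way. The key observation is that under globalization, $||A \Rightarrow [A]_\Gamma||_M = 1$ is equivalent to ``$A \not\subseteq M$ or $[A]_\Gamma \subseteq M$'', since $S(A,M)^* \in \{0,1\}$. Assuming a cycle $A_1 \prec A_2 \prec \cdots \prec A_k \prec A_1$ of pairwise distinct elements, for each consecutive pair we have $A_i \subseteq [A_{i+1}]_{\Gamma_{A_{i+1}}}$ with $A_i \neq A_{i+1}$, so $A_i \Rightarrow [A_i]_\Gamma \in \Gamma_{A_{i+1}}$, and the ``fire'' condition applied to the model $[A_{i+1}]_{\Gamma_{A_{i+1}}}$ yields
\begin{align*}
[A_i]_\Gamma \subseteq [A_{i+1}]_{\Gamma_{A_{i+1}}} \subseteq [A_{i+1}]_\Gamma.
\end{align*}
Chaining around the cycle gives $[A_1]_\Gamma \subseteq [A_k]_\Gamma \subseteq [A_1]_{\Gamma_{A_1}}$, which by Proposition~\ref{pr:sementail_char} is exactly $||A_1 \Rightarrow [A_1]_\Gamma||_{\Gamma_{A_1}} = 1$, contradicting non-redundancy of $\Gamma$.

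Once acyclicity is in hand, I let $\lexlt$ be any linear extension of $\prec$ and put $\Delta_B = \{A \Rightarrow [A]_\Gamma \in \Gamma : A \lexlt B\}$. The containment $\Delta_B \subseteq \Gamma_B$ immediately yields $[B]_{\Delta_B} \subseteq [B]_{\Gamma_B}$. For the reverse inclusion I would show that $[B]_{\Delta_B}$ is already a model of $\Gamma_B$: any rule $A \Rightarrow [A]_\Gamma \in \Gamma_B \setminus \Delta_B$ has $A \neq B$ and $A \not\lexlt B$, hence $A \not\prec B$, hence $A \not\subseteq [B]_{\Gamma_B}$; since $[B]_{\Delta_B} \subseteq [B]_{\Gamma_B}$ this gives $A \not\subseteq [B]_{\Delta_B}$, so the rule fires vacuously under globalization. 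Minimality of the closure then yields $[B]_{\Gamma_B} \subseteq [B]_{\Delta_B}$, establishing~\eqref{eqn:lexlt}.

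The only substantive obstacle I anticipate is the acyclicity of $\prec$; the passage from $\prec$ to $\lexlt$ is a routine topological sort and the final reverse inclusion is purely formal. Example~\ref{ex:witred} already signals why globalization cannot be dropped: the propagation from $A \subseteq [B]_{\Gamma_B}$ to $[A]_\Gamma \subseteq [B]_{\Gamma_B}$ that drives the cycle argument is exactly what fails under the identity hedge.
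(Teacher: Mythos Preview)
Your argument is correct. Both the acyclicity step and the verification of~\eqref{eqn:lexlt} go through exactly as you outline; the only cosmetic point is that in the cycle argument you need consecutive distinctness $A_i\ne A_{i+1}$ (to ensure $A_i\Rightarrow[A_i]_\Gamma\in\Gamma_{A_{i+1}}$), not full pairwise distinctness, but of course a shortest cycle has both.

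Your route is genuinely different from the paper's. The paper builds $\lexlt$ inductively: having already chosen $A_1\lexlt\cdots\lexlt A_n$ with associated rule set $\Psi$, it argues that some $B$ with $B\Rightarrow[B]_\Gamma\in\Gamma\setminus\Psi$ satisfies $[B]_{\Gamma_B}=[B]_\Psi$, and proves this by assuming otherwise, generating an infinite sequence $B_0,B_1,\ldots$ in $\Gamma\setminus\Psi$ with $B_{i+1}\subseteq[B_i]_\Psi$ and $[B_{i+1}]_\Gamma\subseteq[B_i]_{\Gamma_{B_i}}$, extracting a repeat, and deriving the same non-redundancy contradiction you reach. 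In effect the paper re-proves, at every inductive step and relative to the current~$\Psi$, a local version of the acyclicity you establish once and for all. Your formulation separates the combinatorics (acyclicity of a single relation $\prec$ on $\mathcal{X}$, then topological sort) from the closure-theoretic verification of~\eqref{eqn:lexlt}, which makes the role of globalization and of non-redundancy more transparent; the paper's version is more constructive in that it exhibits the order element by element, but the underlying cycle argument is the same, just entangled with the induction.
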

\begin{proof}
  It is immediate that the claim holds trivially for $\Gamma = \emptyset$.
  We inspect the situation for $\Gamma \ne\emptyset$.
  The finiteness of $\Gamma$ gives that $\mathcal{X}$ is finite as well.
  We proceed
  by induction and assume that we have already found
  $A_1,\ldots,A_n \in \mathcal{X}$
  ($n \geq 0$) such that $A_1 \lexlt \cdots \lexlt A_n$ and we put
  \begin{align*}
    \Psi = \{A_1 \Rightarrow [A_1]_\Gamma,\ldots,A_n \Rightarrow [A_n]_\Gamma\}.
  \end{align*}
  In order to prove the assertion, we check that if $\Gamma \setminus \Psi$
  is non-empty, then it contains some $B \Rightarrow [B]_\Gamma$ such that
  \begin{align*}
    [B]_{\Gamma \setminus \{B \Rightarrow [B]_\Gamma\}} = [B]_\Psi
  \end{align*}
  from which we immediately get that $A_1 \lexlt \cdots \lexlt A_n$
  can be extended by $A_n \lexlt B$, see \eqref{eqn:lexlt}.
  By contradiction, let us assume that $\Gamma \setminus \Psi$ is non-empty and
  no such $B \Rightarrow [B]_\Gamma \in \Gamma \setminus \Psi$ exists.
  Thus, we assume that for each $B \Rightarrow [B]_\Gamma \in
  \Gamma \setminus \Psi$ we have 
  $[B]_\Psi \subset [B]_{\Gamma \setminus \{B \Rightarrow [B]_\Gamma\}}$
  because $\Psi \subseteq \Gamma \setminus \{B \Rightarrow [B]_\Gamma\}$.

  Observe that for each
  $B_0 \Rightarrow [B_0]_\Gamma \in \Gamma \setminus \Psi$, there is
  $\Psi_1 \ne\emptyset$ which is minimal in the number of contained
  FAIs such that $\Psi \cup \Psi_1 \subseteq
  \Gamma \setminus \{B_0 \Rightarrow [B_0]_\Gamma\}$ and
  \begin{align*}
    [B_0]_{\Psi \cup \Psi_1} =
    [B_0]_{\Gamma \setminus \{B_0 \Rightarrow [B_0]_\Gamma\}}.
  \end{align*}
  Therefore, using the fact that ${}^*$ is globalization together with the
  last equality and the fact that 
  $[B_0]_{\Psi} \subset
  [B_0]_{\Gamma \setminus \{B_0 \Rightarrow [B_0]_\Gamma\}}$,
  it follows that there must be some
  $B_1 \Rightarrow [B_1]_\Gamma \in \Psi_1 \setminus \Psi$
  with $B_1 \ne B_0$ such that the following conditions are satisfied:
  \begin{itemize}\parskip=0pt
  \item
    $B_1 \subseteq [B_0]_\Psi$,
  \item
    $[B_1]_\Gamma \subseteq
    [B_0]_{\Gamma \setminus \{B_0 \Rightarrow [B_0]_\Gamma\}}$, and thus
  \item
    $[B_1]_\Gamma \subseteq [B_0]_\Gamma$.
  \end{itemize}
  Now, the same observation can be made for $B_1 \Rightarrow [B_1]_\Gamma$.
  That is, there is some $B_2 \Rightarrow [B_2]_\Gamma \in \Gamma \setminus \Psi$
  such that the following conditions are satisfied:
  \begin{itemize}\parskip=0pt
  \item
    $B_2 \subseteq [B_1]_\Psi$,
  \item
    $[B_2]_\Gamma \subseteq
    [B_1]_{\Gamma \setminus \{B_1 \Rightarrow [B_1]_\Gamma\}}$, and
  \item
    $[B_2]_\Gamma \subseteq [B_1]_\Gamma$.
  \end{itemize}
  Therefore, we may repeat the idea over and over again to form
  a sufficiently long sequence $B_0,B_1,B_2,\ldots$ in which, owing to
  the finiteness of $\Gamma$, there must be two indices $i < j$
  such that $B_i = B_j$. As a consequence
  of the above-listed properties of the elements in the sequence, we obtain
  $[B_i]_\Psi = \cdots = [B_j]_\Psi$ as well as
  $[B_i]_\Gamma = \cdots = [B_j]_\Gamma$.
  Now, observe that using $B_i \ne B_{i+1}$, we trivially get
  \begin{align*}
    ||B_{i+1} \Rightarrow [B_{i+1}]_\Gamma||_{\Gamma \setminus
      \{B_i \Rightarrow [B_i]_\Gamma\}} = 1
  \end{align*}
  because $B_{i+1} \Rightarrow [B_{i+1}]_\Gamma \in \Gamma$. Moreover,
  the fact that $[B_i]_\Gamma = [B_{i+1}]_\Gamma$ yields
  \begin{align}
    ||B_{i+1} \Rightarrow [B_i]_\Gamma||_{\Gamma \setminus
      \{B_i \Rightarrow [B_i]_\Gamma\}} = 1.
    \label{eqn:aux_tr1}
  \end{align}
  Finally, using $B_{i+1} \subseteq [B_{i+1}]_\Psi = [B_{i}]_\Psi$
  and Proposition~\ref{pr:sementail_char}, we get that
  \begin{align}
    1 =
    ||B_{i} \Rightarrow B_{i+1}||_\Psi \leq 
    ||B_{i} \Rightarrow B_{i+1}||_{\Gamma \setminus
      \{B_i \Rightarrow [B_i]_\Gamma\}}
    \label{eqn:aux_tr2}
  \end{align}
  on the account of
  $\Psi \subseteq \Gamma \setminus \{B_i \Rightarrow [B_i]_\Gamma\}$.
  Since the semantic entailment of FAIs is transitive,
  from~\eqref{eqn:aux_tr1} and \eqref{eqn:aux_tr2} we further get
  \begin{align*}
    ||B_i \Rightarrow [B_i]_\Gamma||_{\Gamma \setminus
      \{B_i \Rightarrow [B_i]_\Gamma\}} = 1
  \end{align*}
  which contradicts the non-redundancy of $\Gamma$.
  As a result, $A_1 \lexlt \cdots \lexlt A_n$ can
  always be extended by some $B \in \mathcal{X}$
  satisfying~\eqref{eqn:lexlt} provided that
  $\Gamma \setminus \Psi$ is non-empty. Since $\Gamma$
  is finite, this ultimately defines a strict total order
  on $\mathcal{X}$ satisfying~\eqref{eqn:lexlt}.
\end{proof}

\begin{example}\label{ex:nlukid}
  Let us note that in general the existence of the strict total
  order described in Lemma~\ref{le:lexlt} is not ensured if ${}^*$ is
  other than the globalization. For illustration, let us consider
  the same structure of truth degrees as in Example~\ref{ex:witred}
  with ${}^*$ being the identity. Furthermore, consider $\Gamma$
  such as
  \begin{align*}
    \Gamma &=
    \{\{{}^{0.5\!}/r\} \Rightarrow \{p,{}^{0.5\!}/q,{}^{0.5\!}/r\},
    \{\} \Rightarrow \{p\}\}.
  \end{align*}
  In this case, we clearly have
  \begin{align*}
    [\{{}^{0.5\!}/r\}]_{\Gamma \setminus
      \{\{{}^{0.5\!}/r\} \Rightarrow \{p,{}^{0.5\!}/q,{}^{0.5\!}/r\}\}} = 
    [\{{}^{0.5\!}/r\}]_{\{\{\} \Rightarrow \{p\}\}} = 
    \{p,{}^{0.5\!}/r\},
  \end{align*}
  which means that for $\lexlt$ satisfying~\eqref{eqn:lexlt}
  we must have $\{\} \lexlt \{{}^{0.5\!}/r\}$. On the other hand,
  we also have
  \begin{align*}
    [\{\}]_{\Gamma \setminus \{\{\} \Rightarrow \{p\}\}} &=
    [\{\}]_{\{\{{}^{0.5\!}/r\} \Rightarrow \{p,{}^{0.5\!}/q,{}^{0.5\!}/r\}\}} =
    \{{}^{0.5\!}/p\},
  \end{align*}
  i.e., $\{{}^{0.5\!}/r\} \lexlt \{\}$.
  Hence, $\lexlt$ cannot be a strict total order.
  Analogous counterexamples may also be found using other structures of degrees,
  including the three-element G\"odel chain with ${}^*$ being the identity.
\end{example}

\begin{theorem}\label{th:witnr_glob}
  Let $\mathbf{L}$ be a complete residuated lattice with globalization.
  Then, for each finite non-redundant set of FAIs with saturated
  consequents there is an equivalent non-redundant set of FAIs with
  witnessed non-redundancy.
\end{theorem}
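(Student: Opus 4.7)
The plan is to exploit the total order from Lemma~\ref{le:lexlt} and replace each antecedent of $\Gamma$ by its closure under the strictly earlier formulas. Enumerate $\mathcal{X} = \{B_1 \lexlt \cdots \lexlt B_n\}$ and, for each $i$, set
\begin{align*}
  B'_i := [B_i]_{\Gamma \setminus \{B_i \Rightarrow [B_i]_\Gamma\}} = [B_i]_{\Psi_i},
\end{align*}
where $\Psi_i := \{B_j \Rightarrow [B_j]_\Gamma;\, j < i\}$ and the two expressions coincide by~\eqref{eqn:lexlt}. The candidate set is $\Sigma := \{B'_i \Rightarrow [B_i]_\Gamma;\, 1 \leq i \leq n\}$, and three things need to be checked: \textit{(i)}~$\Sigma$ is equivalent to $\Gamma$; \textit{(ii)}~each $B'_i$ is a model of $\Sigma \setminus \{B'_i \Rightarrow [B_i]_\Gamma\}$; \textit{(iii)}~no $B'_i$ is a model of $B'_i \Rightarrow [B_i]_\Gamma$ itself.

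For \textit{(i)} I would induct on $i$ to show that $\Sigma_i := \{B'_j \Rightarrow [B_j]_\Gamma;\, j \leq i\}$ has the same models as $\Gamma_i := \{B_j \Rightarrow [B_j]_\Gamma;\, j \leq i\}$. The inductive step reduces, via globalization, to verifying that every $M \in \mathrm{Mod}(\Gamma_{i-1}) = \mathrm{Mod}(\Sigma_{i-1})$ satisfies $B_i \subseteq M$ iff $B'_i \subseteq M$: the forward direction follows from $B'_i = [B_i]_{\Gamma_{i-1}} \subseteq M$ whenever $M$ is a model of $\Gamma_{i-1}$ containing $B_i$, and the reverse from $B_i \subseteq B'_i$. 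For \textit{(ii)}, observe that by its very definition $B'_i$ is a model of every $B_j \Rightarrow [B_j]_\Gamma$ with $j \ne i$. Fix such a $j$: either $[B_j]_\Gamma \subseteq B'_i$, in which case $B'_i$ trivially satisfies $B'_j \Rightarrow [B_j]_\Gamma$; or globalization forces $B_j \not\subseteq B'_i$, and since $B_j \subseteq B'_j$ this yields $B'_j \not\subseteq B'_i$, so the antecedent of $B'_j \Rightarrow [B_j]_\Gamma$ fails at $B'_i$. Part~\textit{(iii)} follows from Proposition~\ref{pr:sementail_char} combined with the non-redundancy of $\Gamma$: the latter forces $[B_i]_\Gamma \not\subseteq B'_i$, so $B'_i$ fails its own FAI. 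Parts \textit{(ii)} and \textit{(iii)} together deliver non-redundancy of $\Sigma$ and the fact that the non-redundancy is witnessed by the antecedents.

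The main obstacle is that the construction depends on globalization in two essentially independent places: once inside Lemma~\ref{le:lexlt} to manufacture the total order, and again in part~\textit{(ii)} to turn the graded condition $S(B_j,B'_i)^{\ast} \leq S([B_j]_\Gamma,B'_i)$ into the crisp dichotomy ``$B_j$ is either fully contained in $B'_i$ or not.'' Example~\ref{ex:nlukid} already warns that this crisp alternative fails for other hedges, which is precisely why the theorem is stated only for globalization and why the experimental section will be needed to assess the behaviour of the analogous construction under more general hedges.
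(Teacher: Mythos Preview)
Your proposal is correct and follows the same construction as the paper: both replace each antecedent $A$ by $[A]_{\Gamma\setminus\{A\Rightarrow[A]_\Gamma\}}$ to form $\Sigma$, and both verify equivalence with $\Gamma$ and witnessed non-redundancy along the same lines (using Lemma~\ref{le:lexlt} for the former and the dichotomy forced by globalization for the latter). Your inductive organization of the equivalence proof via $\mathrm{Mod}(\Gamma_i)=\mathrm{Mod}(\Sigma_i)$ is in fact a bit tidier than the paper's argument, which unfolds $[A]_{\Gamma\setminus\{A\Rightarrow[A]_\Gamma\}}$ explicitly as a union of consequents along the $\lexlt$-chain before reaching the same conclusion; otherwise the two proofs coincide.
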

\begin{proof}
  Let $\Gamma$ be a non-redundant set of FAIs with saturated consequents.
  Recall that in this case, each $A \Rightarrow B \in \Gamma$ is in fact
  in the form $A \Rightarrow [A]_\Gamma$. In addition, since $\Gamma$ is
  non-redundant, we have $A \ne [A]_\Gamma$ whenever 
  $A \Rightarrow [A]_\Gamma \in \Gamma$, i.e.,
  it follows that $\Gamma \subseteq \univ$. Now, put
  \begin{align}
    \Sigma =
    \bigl\{[A]_{\Gamma \setminus \{A \Rightarrow [A]_\Gamma\}} \Rightarrow
    [A]_\Gamma;\, A \Rightarrow [A]_\Gamma \in \Gamma\bigr\}.
    \label{eqn:Sigma_infl}
  \end{align}
  First, we prove that $\Sigma$ and $\Gamma$ are equivalent.
  According to Proposition~\ref{pr:equivalence}, it means showing
  that $\mathrm{Mod}(\Gamma) = \mathrm{Mod}(\Sigma)$. Clearly,
  if $M \in \mathrm{Mod}(\Gamma)$ and
  for $A \Rightarrow [A]_\Gamma \in \Gamma$
  we have $[A]_{\Gamma \setminus \{A \Rightarrow [A]_\Gamma\}} \subseteq M$,
  then $A \subseteq M$ because of the extensivity of
  $[{\cdots}]_{\Gamma \setminus \{A \Rightarrow [A]_\Gamma\}}$
  which directly gives $[A]_\Gamma \subseteq M$ on account of
  $M \in \mathrm{Mod}(\Gamma)$. As a consequence,
  we have $\mathrm{Mod}(\Gamma) \subseteq \mathrm{Mod}(\Sigma)$. Thus, it remains
  to prove the converse inclusion.

  Take any $M \in \mathrm{Mod}(\Sigma)$. Applying Lemma~\ref{le:lexlt},
  we assume that $\lexlt$ is the strict total order satisfying~\eqref{eqn:lexlt}.
  Furthermore, for a given $A \Rightarrow [A]_\Gamma \in \Gamma$,
  we assume that $||B \Rightarrow [B]_\Gamma||_M = 1$ holds
  for all $B \lexlt A$. Using Lemma~\ref{le:lexlt} and the fact that ${}^*$
  is globalization~\cite{BeVy:Pmfai}, we may write
  \begin{align}
    [A]_{\Gamma \setminus \{A \Rightarrow [A]_\Gamma\}} =
    A \cup [B_1]_\Gamma \cup \cdots \cup [B_n]_\Gamma
    \label{eqn:clos_union}
  \end{align}
  for $B_1 \lexlt \cdots \lexlt B_n \lexlt A$ so that for the elements
  in the sequence, we have
  $B_1 \subseteq A$,
  $B_2 \subseteq A \cup [B_1]_\Gamma$,
  $B_3 \subseteq A \cup [B_1]_\Gamma \cup [B_2]_\Gamma$,\,\ldots,
  i.e., in general
  \begin{align*}
    \textstyle B_i \subseteq A \cup \bigcup\{[B_k]_\Gamma;\, k < i\}
  \end{align*}
  for all $i=1,\ldots,n$. Now, suppose that $A \subseteq M$.
  It is easy to see that by induction over $i=1,\ldots,n$,
  it follows that 
  \begin{align*}
    \textstyle B_i \subseteq A \cup \bigcup\{[B_k]_\Gamma;\, k < i\} \subseteq M
  \end{align*}
  for all $i=1,\ldots,n$. Since for each
  $B_i \Rightarrow [B_i]_\Gamma \in \Gamma$ we have $B_i \lexlt A$
  and for such a formula we have assumed $||B_i \Rightarrow [B_i]_\Gamma||_M = 1$,
  the previous inclusion gives $[B_i]_\Gamma \subseteq M$ for all $i=1,\ldots,n$.
  Therefore, from \eqref{eqn:clos_union} it follows that
  \begin{align*}
    [A]_{\Gamma \setminus \{A \Rightarrow [A]_\Gamma\}} \subseteq M.
  \end{align*}
  Furthermore, \eqref{eqn:Sigma_infl} and $M \in \mathrm{Mod}(\Sigma)$
  together with the previous inclusion yield $[A]_\Gamma \subseteq M$
  which proves $||A \Rightarrow [A]_\Gamma||_M = 1$.
  Since $A \Rightarrow [A]_\Gamma$ was taken as an arbitrary formula
  in $\Gamma$, we get $M \in \mathrm{Mod}(\Gamma)$.
  Hence, $\Gamma$ and $\Sigma$ are equivalent.

  We now show that $\Sigma$ is non-redundant and its non-redundancy
  is witnessed. In order to see that, we check the condition in
  Lemma~\ref{le:witred}. Thus, take an arbitrary
  $[A]_{\Gamma \setminus \{A \Rightarrow [A]_\Gamma\}} \Rightarrow
  [A]_\Gamma \in \Sigma$. Notice that
  $[A]_{\Gamma \setminus \{A \Rightarrow [A]_\Gamma\}}$ is not a model
  of $\Gamma$ because
  $[A]_{\Gamma \setminus \{A \Rightarrow [A]_\Gamma\}} \subseteq [A]_{\Gamma}$
  and the non-redundancy of $\Gamma$ yields
  $||A \Rightarrow [A]_\Gamma||_{\Gamma \setminus \{A \Rightarrow [A]_\Gamma\}} < 1$,
  i.e., $[A]_{\Gamma} \nsubseteq
  [A]_{\Gamma \setminus \{A \Rightarrow [A]_\Gamma\}}$
  and therefore we get that
  $[A]_{\Gamma \setminus \{A \Rightarrow [A]_\Gamma\}} \subset
  [A]_{\Gamma}$, i.e., $\Sigma \subseteq \univ$.
  Now, suppose that
  \begin{align*}
    [B]_{\Gamma \setminus \{B \Rightarrow [B]_\Gamma\}} \subseteq
    [A]_{\Gamma \setminus \{A \Rightarrow [A]_\Gamma\}}
  \end{align*}
  for any
  $[B]_{\Gamma \setminus \{B \Rightarrow [B]_\Gamma\}} \Rightarrow
  [B]_\Gamma \in \Sigma$ such that $B \ne A$.
  As an immediate consequence of the extensivity of
  $[{\cdots}]_{\Gamma \setminus \{B \Rightarrow [B]_\Gamma\}}$, we get 
  \begin{align*}
    B \subseteq [A]_{\Gamma \setminus \{A \Rightarrow [A]_\Gamma\}}.
  \end{align*}
  Moreover, $B \Rightarrow [B]_\Gamma \in \Gamma$ and since $A \ne B$,
  $[A]_{\Gamma \setminus \{A \Rightarrow [A]_\Gamma\}}$
  is a model of all FAIs in $\Gamma$ with the exception of 
  $A \Rightarrow [A]_\Gamma$, i.e., including 
  $B \Rightarrow [B]_\Gamma \in \Gamma$. Therefore, it follows that
  \begin{align*}
    [B]_\Gamma \subseteq [A]_{\Gamma \setminus \{A \Rightarrow [A]_\Gamma\}}.
  \end{align*}
  As a consequence, $[A]_{\Gamma \setminus \{A \Rightarrow [A]_\Gamma\}}$
  is a model of
  \begin{align*}
    \Sigma \setminus 
    \{[A]_{\Gamma \setminus \{A \Rightarrow [A]_\Gamma\}}
    \Rightarrow [A]_\Gamma\}.
  \end{align*}
  Now, apply Lemma~\ref{le:witred}.
\end{proof}

\begin{example}\label{ex:cannot_arbitrary}
  The construction in Theorem~\ref{th:witnr_glob} cannot be extended to
  any $\mathbf{L}$ with arbitrary ${}^*$. For instance,
  let $\mathbf{L}$ be the three-element G\"odel chain with ${}^*$ being
  the identity. Furthermore, let
  \begin{align*}
    \Gamma &= \{
    \{{}^{0.5\!}/p\} \Rightarrow \{{}^{0.5\!}/p,{}^{0.5\!}/q,r\},
    \{p\} \Rightarrow \{p,q,r\}\}.
  \end{align*}
  Then, the corresponding $\Sigma$ given by \eqref{eqn:Sigma_infl} is 
  \begin{align*}
    \Sigma &= \{
    \{{}^{0.5\!}/p,{}^{0.5\!}/q,{}^{0.5\!}/r\} \Rightarrow
    \{{}^{0.5\!}/p,{}^{0.5\!}/q,r\},
    \{p,{}^{0.5\!}/q,r\} \Rightarrow \{p,q,r\}\}
  \end{align*}
  because, using the fact that $\otimes$ is $\wedge$, we have
  \begin{align*}
    [\{{}^{0.5\!}/p\}]_{\{\{p\} \Rightarrow \{p,q,r\}\}}
    &=
    \{{}^{0.5\!}/p,{}^{0.5\!}/q,{}^{0.5\!}/r\},
    \\
    [\{p\}]_{\{\{{}^{0.5\!}/p\} \Rightarrow \{{}^{0.5\!}/p,{}^{0.5\!}/q,r\}\}}
    &=
    \{p,{}^{0.5\!}/q,r\}.
  \end{align*}
  Trivially, $\{{}^{0.5\!}/p\} \in \mathrm{Mod}(\Sigma)$ because
  \begin{align*}
    ||\{{}^{0.5\!}/p,{}^{0.5\!}/q,{}^{0.5\!}/r\} \Rightarrow
    \{{}^{0.5\!}/p,{}^{0.5\!}/q,r\}||_{\{{}^{0.5\!}/p\}}
    &= 0 \rightarrow 0 = 1,
    \\
    ||\{p,{}^{0.5\!}/q,r\} \Rightarrow
    \{p,q,r\}\}||_{\{{}^{0.5\!}/p\}}
    &= 0 \rightarrow 0 = 1.
  \end{align*}
  In contrast, $\{{}^{0.5\!}/p\} \not\in \mathrm{Mod}(\Gamma)$ since
  \begin{align*}
    ||\{{}^{0.5\!}/p\} \Rightarrow \{{}^{0.5\!}/p,{}^{0.5\!}/q,r\}
    ||_{\{{}^{0.5\!}/p\}} &=
    1 \rightarrow 0 = 0.
  \end{align*}
  Hence, using Proposition~\ref{pr:equivalence},
  $\Gamma$ and $\Sigma$ are not equivalent.
\end{example}

The following assertion gives a connection between sets of FAIs with witnessed
non-redundancy and systems of pseudo-intents, see~\eqref{eqn:system_P}.
As a consequence, under the assumption of ${}^*$ being the globalization,
we establish a procedure for getting a non-redundant base given by pseudo-intents
from any non-redundant base of a given $\mathbf{L}$-context.

\begin{theorem}\label{th:wit=psd}
  Let\/ $\mathbf{I} = \langle X,Y,I\rangle$ be a formal\/ $\mathbf{L}$-context,
  $\Sigma$ be a non-redundant base of\/ $\mathbf{I}$. Then, the following
  conditions are equivalent:
  \begin{enumerate}\parskip=0pt
  \item[\itm{1}]
    $\Sigma \subseteq \univ[\Sigma]$ and the non-redundancy of
    $\Sigma$ is witnessed.
  \item[\itm{2}]
    There is a system $\mathcal{P}$ of pseudo-intents of\/ $\mathbf{I}$
    such that $\Sigma$ is given by~\eqref{eqn:nred}.
  \end{enumerate}
\end{theorem}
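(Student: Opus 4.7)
The plan is to use the characterization of models of a complete base in $\mathbf{I}$ as intents, then translate the system-of-pseudo-intents condition and the witnessed-non-redundancy condition into each other almost directly.

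First I would record a prerequisite observation. Since $\Sigma$ is a non-redundant base of $\mathbf{I}$, we have $||A \Rightarrow B||_\Sigma = ||A \Rightarrow B||_\mathbf{I}$ for all $A,B \in L^Y$. Combining this equality (applied to $A \Rightarrow A^{\downarrow\uparrow}$ and its converse) with Proposition~\ref{pr:sementail_char}, one obtains $[A]_\Sigma = A^{\downarrow\uparrow}$ for every $A \in L^Y$, and therefore $\mathrm{Mod}(\Sigma) = \{A \in L^Y : A = A^{\downarrow\uparrow}\}$, i.e.\ the models of $\Sigma$ are exactly the intents of $\mathbf{I}$. In particular, $A \in \mathcal{U}$ (as in~\eqref{eqn:U}) iff $A \notin \mathrm{Mod}(\Sigma)$ iff $A \ne [A]_\Sigma$.

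For the implication \itm{1}$\Rightarrow$\itm{2}, I would set $\mathcal{P} = \{A \in L^Y : A \Rightarrow B \in \Sigma \text{ for some } B\}$. The assumption $\Sigma \subseteq \univ[\Sigma]$ gives both that every $A \in \mathcal{P}$ satisfies $A \ne [A]_\Sigma = A^{\downarrow\uparrow}$ (so $\mathcal{P} \subseteq \mathcal{U}$) and that each $A \Rightarrow B \in \Sigma$ has $B = [A]_\Sigma = A^{\downarrow\uparrow}$, so $\Sigma = \{A \Rightarrow A^{\downarrow\uparrow} : A \in \mathcal{P}\}$, matching~\eqref{eqn:nred}. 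It remains to verify~\eqref{eqn:system_P} for $\mathcal{P}$. For $P \in \mathcal{P}$, witnessed non-redundancy yields $P \in \mathrm{Mod}(\Sigma \setminus \{P \Rightarrow P^{\downarrow\uparrow}\})$, which rephrased says $||Q \Rightarrow Q^{\downarrow\uparrow}||_P = 1$ for every $Q \in \mathcal{P}$ with $Q \ne P$. Conversely, if $P \in \mathcal{U}$ satisfies this condition and $P \notin \mathcal{P}$, then the condition holds vacuously for all $Q \in \mathcal{P}$ (no exception to exclude), giving $P \in \mathrm{Mod}(\Sigma)$ and hence $P = P^{\downarrow\uparrow}$, contradicting $P \in \mathcal{U}$.

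For \itm{2}$\Rightarrow$\itm{1}, write $\Sigma = \{P \Rightarrow P^{\downarrow\uparrow} : P \in \mathcal{P}\}$ where $\mathcal{P}$ is a system of pseudo-intents. By the preliminary observation, $[P]_\Sigma = P^{\downarrow\uparrow}$, so consequents are saturated; and $P \in \mathcal{P} \subseteq \mathcal{U}$ forces $P \ne P^{\downarrow\uparrow} = [P]_\Sigma$, giving $\Sigma \subseteq \univ[\Sigma]$. Finally, witnessed non-redundancy is immediate from the ``only if'' direction of~\eqref{eqn:system_P}: for each $P \in \mathcal{P}$ we have $||Q \Rightarrow Q^{\downarrow\uparrow}||_P = 1$ whenever $Q \in \mathcal{P}$ and $Q \ne P$, i.e.\ $P \in \mathrm{Mod}(\Sigma \setminus \{P \Rightarrow P^{\downarrow\uparrow}\})$.

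The main (and only nontrivial) obstacle I anticipate is being completely rigorous about the preliminary observation $[A]_\Sigma = A^{\downarrow\uparrow}$: one has to verify that the degree $||A \Rightarrow B||_\mathbf{I}$ admits the same ``greatest consequent'' characterization as $||A \Rightarrow B||_\Sigma$ does in Proposition~\ref{pr:sementail_char}. Once this identification of $\mathrm{Mod}(\Sigma)$ with the intents is in place, both implications reduce to rewriting the definitions of witnessed non-redundancy, saturation, and system of pseudo-intents in terms of one another, with no further computation required.
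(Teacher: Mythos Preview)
Your proof is correct and follows essentially the same route as the paper's: both use the identification $[A]_\Sigma = A^{\downarrow\uparrow}$ (which the paper simply cites as \cite[Theorem~5.3]{BeVy:ADfDwG}, so your flagged ``main obstacle'' is already a known result and needs no new work), set $\mathcal{P}$ to be the antecedents of $\Sigma$, and then translate the witnessed-non-redundancy condition and the system-of-pseudo-intents condition~\eqref{eqn:system_P} into one another. The only cosmetic difference is that for the step ``$P \in \mathcal{U}$, $P$ a model of $\Sigma\setminus\{P\Rightarrow P^{\downarrow\uparrow}\}$ $\Rightarrow$ $P\in\mathcal{P}$'' you argue the contrapositive directly, whereas the paper packages exactly this implication as its Theorem~\ref{th:compl}.
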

\begin{proof}
  Clearly, if $\Sigma$ is given by~\eqref{eqn:nred} for
  some system $\mathcal{P}$ of pseudo-intents of $\mathbf{I}$,
  then $\Sigma \subseteq \univ[\Sigma]$. Furthermore, since $\Sigma$
  is complete in $\mathbf{I}$, we have $M^{\downarrow\uparrow} = [M]_\Sigma$
  for all $M \in L^Y$, see \cite[Theorem~5.3]{BeVy:ADfDwG}. Now, assume that
  \itm{1} holds. Since $\Sigma$ is complete in $\mathbf{I}$,
  Theorem~\ref{th:compl} yields that
  for each $P \ne P^{\downarrow\uparrow}$, we have 
  $P \Rightarrow P^{\downarrow\uparrow} \in \Sigma$ provided that
  $P \in \mathrm{Mod}(\Sigma \setminus \{P \Rightarrow P^{\downarrow\uparrow}\})$.
  In addition to that, the fact that the non-redundancy of $\Sigma$ is witnessed
  gives that for each $P \ne P^{\downarrow\uparrow}$, we have 
  $P \in \mathrm{Mod}(\Sigma \setminus \{P \Rightarrow P^{\downarrow\uparrow}\})$
  provided that $P \Rightarrow P^{\downarrow\uparrow} \in \Sigma$. Altogether,
  for each $P \ne P^{\downarrow\uparrow}$, we have 
  \begin{align*}
    P \Rightarrow P^{\downarrow\uparrow} \in \Sigma
    \text{ if{}f }
    P \in \mathrm{Mod}(\Sigma \setminus \{P \Rightarrow P^{\downarrow\uparrow}\}).
  \end{align*}
  Thus, for $\mathcal{P} = \{P \in L^Y;\,
  P \Rightarrow P^{\downarrow\uparrow} \in \Sigma\}$, it follows
  that for each $P \ne P^{\downarrow\uparrow}$:
  \begin{align*}
    P \in \mathcal{P}
    \text{ if{}f }
    P \in \mathrm{Mod}(\Sigma \setminus \{P \Rightarrow P^{\downarrow\uparrow}\}).
  \end{align*}
  Now, observe that
  $P \in \mathrm{Mod}(\Sigma \setminus \{P \Rightarrow P^{\downarrow\uparrow}\})$
  is true if{}f for each $Q \Rightarrow Q^{\downarrow\uparrow} \in \Sigma$ such
  that $Q \ne P$, we have $||Q \Rightarrow Q^{\downarrow\uparrow}||_P = 1$
  Hence, for each $P \ne P^{\downarrow\uparrow}$:
  \begin{align*}
    P \in \mathcal{P}
    \text{ if{}f }
    ||Q \Rightarrow Q^{\downarrow\uparrow}||_P = 1
    \text{ for any } Q \in \mathcal{P} \text{ such that } Q \ne P.
  \end{align*}
  Altogether, $\mathcal{P}$ is a system of pseudo-intents and $\Sigma$ is
  of the form~\eqref{eqn:nred} which proves \itm{2}. Conversely,
  \itm{1} is a direct consequence of \itm{2}.
\end{proof}

\begin{corollary}\label{cor:getbase}
  Let\/ $\mathbf{I} = \langle X,Y,I\rangle$ be a formal\/ $\mathbf{L}$-context
  and let $\Gamma$ be a non-redundant base of $\mathbf{I}$ which consists of
  FAIs with saturated consequents. If $\mathbf{L}$ is finite and ${}^*$
  is globalization, then $\Sigma$ given by~\eqref{eqn:Sigma_infl} is
  a minimal base of\/ $\mathbf{I}$ and
  \begin{align}
    \mathcal{P} = \{A \in L^Y\!;\, A \Rightarrow B \in \Sigma\}
    \label{eqn:P}
  \end{align}
  is a system of pseudo-intents of $\mathbf{I}$.
\end{corollary}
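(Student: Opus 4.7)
The plan is to assemble the corollary from the preceding two theorems together with the uniqueness/minimality result for globalization recalled in Section~\ref{sec:bases_survey}. The overall strategy is: feed $\Gamma$ into Theorem~\ref{th:witnr_glob} to obtain that the specific $\Sigma$ defined by~\eqref{eqn:Sigma_infl} is a non-redundant base of $\mathbf{I}$ with witnessed non-redundancy; then push $\Sigma$ through Theorem~\ref{th:wit=psd} to extract the system of pseudo-intents; finally quote the known fact that under globalization on finite~$\mathbf{L}$ such a system yields a \emph{minimal} base.

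First I would apply Theorem~\ref{th:witnr_glob} to $\Gamma$. The hypotheses of that theorem are satisfied: $\Gamma$ is a finite non-redundant set with saturated consequents and ${}^*$ is globalization. Inspection of its proof shows that precisely the set $\Sigma$ given by~\eqref{eqn:Sigma_infl} is equivalent to $\Gamma$, is non-redundant, has witnessed non-redundancy, and satisfies the strict inclusion $[A]_{\Gamma \setminus \{A \Rightarrow [A]_\Gamma\}} \subset [A]_\Gamma$ for every $A \Rightarrow [A]_\Gamma \in \Gamma$. Since $\Gamma$ is complete in $\mathbf{I}$ and $\Sigma$ is equivalent to $\Gamma$, $\Sigma$ is also complete in $\mathbf{I}$, hence a non-redundant base of $\mathbf{I}$.

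Next I would upgrade $\Sigma \subseteq \univ$ to $\Sigma \subseteq \univ[\Sigma]$. By the equivalence of $\Sigma$ and $\Gamma$ and Proposition~\ref{pr:equivalence}, $\mathrm{Mod}(\Sigma) = \mathrm{Mod}(\Gamma)$, so $[A]_\Sigma = [A]_\Gamma$ for every $A \in L^Y$. Thus each FAI in $\Sigma$ has the form $A \Rightarrow [A]_\Sigma$ with $A \ne [A]_\Sigma$, which is exactly the condition $\Sigma \subseteq \univ[\Sigma]$. With this in hand, Theorem~\ref{th:wit=psd}\,\itm{1}$\Rightarrow$\itm{2} applies and produces a system $\mathcal{P}$ of pseudo-intents of $\mathbf{I}$ such that $\Sigma = \{P \Rightarrow P^{\downarrow\uparrow} : P \in \mathcal{P}\}$; reading off the antecedents gives precisely~\eqref{eqn:P}.

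For the final claim of minimality, I would invoke the result stated in Section~\ref{sec:bases_survey} (attributed to~\cite{BeChVy:Ifdwfa,BeVy:Falaitvenb}): when $\mathbf{L}$ is finite and ${}^*$ is the globalization, $\mathbf{I}$ admits a unique system of pseudo-intents whose associated base~\eqref{eqn:nred} is minimal. Applied to the $\mathcal{P}$ just obtained, this yields that $\Sigma$ is a minimal base of $\mathbf{I}$. The only nontrivial step, and the one deserving care, is verifying that $\Sigma \subseteq \univ[\Sigma]$ rather than merely $\Sigma \subseteq \univ[\Gamma]$, since Theorem~\ref{th:wit=psd} is stated with respect to the closures induced by $\Sigma$ itself; everything else is essentially bookkeeping built on Theorems~\ref{th:witnr_glob} and~\ref{th:wit=psd} and on the cited uniqueness-and-minimality fact for globalization.
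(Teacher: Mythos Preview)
Your proposal is correct and follows essentially the same route as the paper's proof, which simply reads ``Apply Theorem~\ref{th:witnr_glob} and Theorem~\ref{th:wit=psd}; the minimality of $\Sigma$ then follows by \cite[Theorem~5.20]{BeVy:ADfDwG}.'' Your explicit verification that $\Sigma \subseteq \univ[\Sigma]$ (via $[A']_\Sigma = [A']_\Gamma = [A]_\Gamma$ for $A' = [A]_{\Gamma\setminus\{A\Rightarrow[A]_\Gamma\}}$) is a useful unpacking of a step the paper leaves implicit.
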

\begin{proof}
  Apply Theorem~\ref{th:witnr_glob} and Theorem~\ref{th:wit=psd}.
  The minimality of $\Sigma$ then follows
  by~\cite[Theorem~5.20]{BeVy:ADfDwG}, cf. also~\cite{BeVy:Falaitvenb}.
\end{proof}

\begin{remark}
  Corollary~\ref{cor:getbase} allows us to find a non-redundant base of $\mathbf{I}$
  given by a system of pseudo-intents in an alternative way. The method is
  restricted to ${}^*$ being globalization but as we shall see in
  Section~\ref{sec:notes}, the procedure may produce the desired base even in
  case of general hedges and our experiments indicate that this happens frequently.
  Compared to the graph-based method discussed in Section~\ref{sec:bases_survey},
  such an approach is considerably faster.
\end{remark}

\section{Experimental Observations and Comments}\label{sec:notes}
In this section, we present an additional experimental insight into the problem
of computing sets of FAIs with witnessed non-redundancy
based on the observation in the proof of Theorem~\ref{th:witnr_glob}
and Corollary~\ref{cor:getbase}. Recall
that the assertions presuppose that the utilized hedge is a globalization and
Example~\ref{ex:cannot_arbitrary} shows that the procedure cannot be extended
for arbitrary hedges in general. That is, if $\Gamma$ is finite and non-redundant
set of FAIs with saturated consequents, it can happen that $\Sigma$ given by
\eqref{eqn:Sigma_infl} is not equivalent to $\Gamma$. A question is whether
such situations are frequent or rare. The first series of our experiments
focuses on this phenomenon.

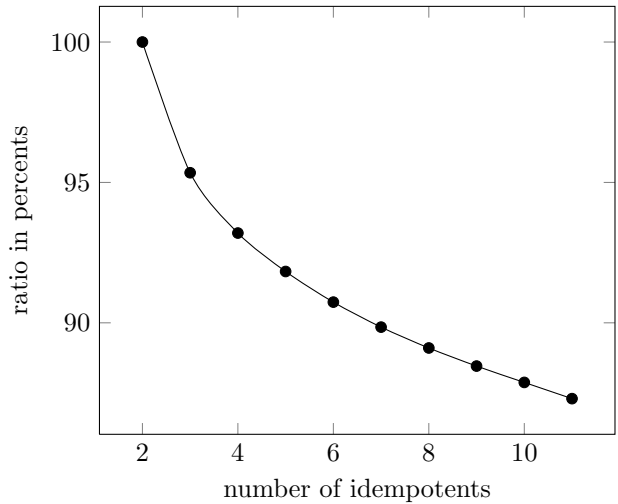
\begin{figure}
  \centering
  \begin{tikzpicture}
    \begin{axis}[
      xlabel={number of idempotents},
      ylabel={ratio in percents}]
      \addplot[smooth,mark=*] plot coordinates {
        (2, 100.0000)
        (3, 95.34444)
        (4, 93.19722)
        (5, 91.82798)
        (6, 90.73730)
        (7, 89.84722)
        (8, 89.10417)
        (9, 88.45972)
        (10, 87.87778)
        (11, 87.30000)};
    \end{axis}
  \end{tikzpicture}
  \caption{Percentage of successful transformations of randomly generated
    non-redundant sets of FAIs with saturated consequents into equivalent
    sets with witnessed non-redundancy.}
  \label{fig:ratio}
\end{figure}

We have performed experiments with randomly generated non-redundant
sets of FAIs with saturated consequents using structures of truth degrees defined
on $11$-element equidistant subchains of the real unit interval. The utilized
structures of degrees were all BL-algebras~\cite{Haj:BL1,Haj:MFL} which
can be defined on such chains.
It is a well-known fact that such BL-algebras result as ordinal
sums~\cite{Haj:BL1} of finite linear \L ukasiewicz algebras and
are given solely by the elements which are idempotent with respect to $\otimes$,
see~\cite{KMP:TN,DeMe:TNPL}. The hedge was always considered as the identity.
Recall that in this setting, $\Sigma$ given by \eqref{eqn:Sigma_infl} is not
equivalent to the input $\Gamma$ in general. Nevertheless, our experimental
observations show that with growing number of idempotents in the structure,
the ratio of successful transformations of $\Gamma$ to an equivalent $\Sigma$ given
by~\eqref{eqn:Sigma_infl} is decreasing. Figure~\ref{fig:ratio} shows the mean
percentages of successful transformations depending on the number of idempotents
in linear $11$-element BL-algebras with ${}^*$ being the identity. The graph
was generated using more than $10^6$ randomly generated sets of FAIs consisting
of $20$ formulas using up to $10$ distinct attributes. Interestingly,
in case of the $11$-element \L ukasiewicz chain (the case of only $2$ idempotents),
the transformation was always successful.
Therefore, we hypothesize that at least on finite \L ukasiewicz chains,
Theorem~\ref{th:witnr_glob} can be extended to ${}^*$ being the identity.
To prove this hypothesis is an interesting open problem. Note that due to
Example~\ref{ex:nlukid}, one cannot use the proof technique of
Theorem~\ref{th:witnr_glob} because the utilized strict total order
described in Lemma~\ref{le:lexlt} may not exist. Also note that even in
the worst case which seems to be the $11$-element G\"odel chain
(all elements idempotent),
the percentage of successes is relatively high (above 87\,\%), i.e.,
we can say that even if Theorem~\ref{th:witnr_glob} does not hold for general
hedges, the chance of obtaining a desired equivalent set of FAIs with
witnessed non-redundancy is relatively high.

\begin{figure}
  \centering
  \begin{tikzpicture}
    \begin{axis}[
      xlabel={density of input datasets in percents},
      ylabel={mean running time in seconds}]
      \addplot[smooth,mark=x] plot coordinates {
        (06, 1.259095238)
        (11, 0.709122105)
        (16, 0.410900923)
        (21, 0.229117177)
        (26, 0.155245586)
        (31, 0.106250645)
        (36, 0.066180776)
        (41, 0.044732894)
        (46, 0.040428098)
        (51, 0.039018297)
        (56, 0.034192170)
        (61, 0.031251061)
        (66, 0.036033878)
        (71, 0.043469881)
        (76, 0.037082771)
        (81, 0.019290560)
        (86, 0.011646647)
        (91, 0.009541485)};
      \addlegendentry{graph-based method}
      \addplot[smooth,mark=*] plot coordinates {
        (06, 0.0008809524)
        (11, 0.0014273684)
        (16, 0.0017538462)
        (21, 0.0020343542)
        (26, 0.0022620299)
        (31, 0.0024660812)
        (36, 0.0027295645)
        (41, 0.0029659246)
        (46, 0.0031263119)
        (51, 0.0032611189)
        (56, 0.0033947347)
        (61, 0.0034816534)
        (66, 0.0035080377)
        (71, 0.0034754619)
        (76, 0.0034273535)
        (81, 0.0033845870)
        (86, 0.0032972973)
        (91, 0.0031572052)};
      \addlegendentry{alternative method}
    \end{axis}
  \end{tikzpicture}
  \caption{Real running time of the graph-based and alternative methods for
    computing bases of FAIs with witnessed non-redundancy.}
  \label{fig:slowfast}
\end{figure}
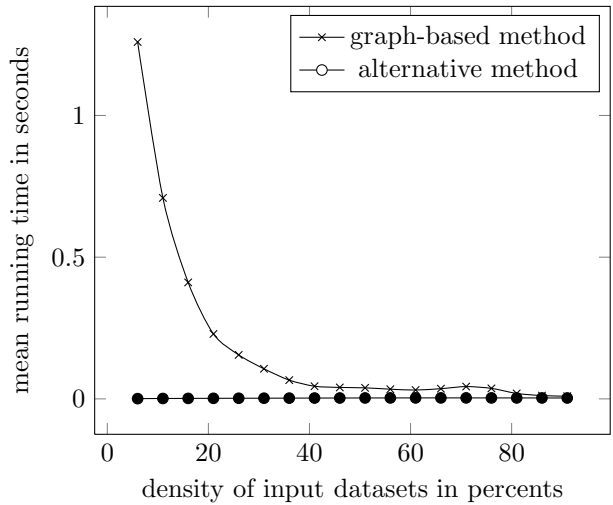

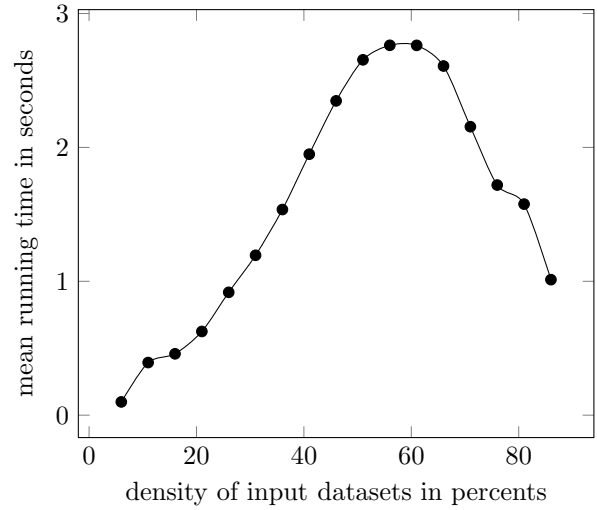
\begin{figure}
  \centering
  \begin{tikzpicture}
    \begin{axis}[
      xlabel={density of input datasets in percents},
      ylabel={mean running time in seconds}]
      \addplot[smooth,mark=*] plot coordinates {
        (06, 0.0990000)
        (11, 0.3930000)
        (16, 0.4582286)
        (21, 0.6251985)
        (26, 0.9176860)
        (31, 1.1944113)
        (36, 1.5360988)
        (41, 1.9494414)
        (46, 2.3476688)
        (51, 2.6542379)
        (56, 2.7628059)
        (61, 2.7622852)
        (66, 2.6080253)
        (71, 2.1547873)
        (76, 1.7188715)
        (81, 1.5764889)
        (86, 1.0122308)};
    \end{axis}
  \end{tikzpicture}
  \caption{Real running time of the alternative method for computing bases
    of FAIs with witnessed non-redundancy.}
  \label{fig:fastTime}
\end{figure}

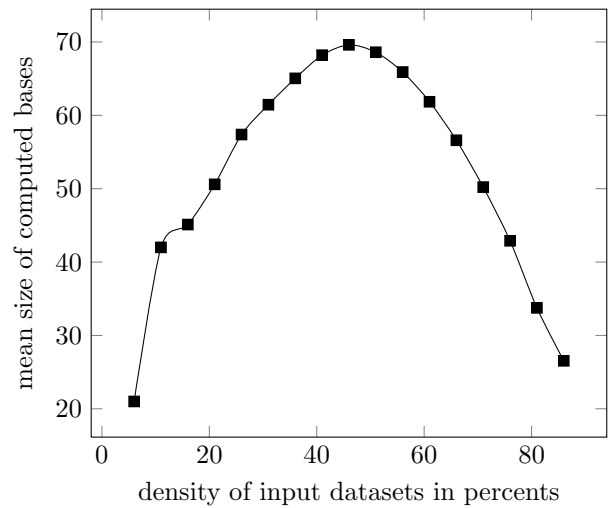
\begin{figure}
  \centering
  \begin{tikzpicture}
    \begin{axis}[
      xlabel={density of input datasets in percents},
      ylabel={mean size of computed bases}]
      \addplot[smooth,mark=square*] plot coordinates {
        (06, 21.00000)
        (11, 42.00000)
        (16, 45.11429)
        (21, 50.59559)
        (26, 57.38259)
        (31, 61.44765)
        (36, 65.04053)
        (41, 68.20434)
        (46, 69.60760)
        (51, 68.60106)
        (56, 65.90500)
        (61, 61.84968)
        (66, 56.60337)
        (71, 50.22086)
        (76, 42.89944)
        (81, 33.75556)
        (86, 26.53846)};
    \end{axis}
  \end{tikzpicture}
  \caption{Mean sizes of bases of FAIs with witnessed non-redundancy
    depending on the density of input data sets.}
  \label{fig:fastSize}
\end{figure}

Our next experiment shows the comparison of running times needed to determine
a non-redundant base given by systems of pseudo-intents using the graph-based
method outlined in~Section~\ref{sec:bases_survey} and the alternative method
based on removing redundant FAIs from a complete set consisting of FAIs with
saturated consequents and then applying Theorem~\ref{th:witnr_glob}.
Figure~\ref{fig:slowfast} shows the alternative method is (as expected)
faster by an order of several magnitudes. The graph was generated
based on observing the running time of the algorithms for $10^5$ randomly generated
contexts with $50$ objects and (only) $4$ attributes using
a three-element \L ukasiewicz chain.
For higher numbers of attributes and/or higher numbers of truth degrees,
the graph-based method is practically not applicable. The graph in
Figure~\ref{fig:slowfast} shows the dependency of the mean running time
on the density of input data sets which is for
$\mathbf{I} = \langle X,Y,I\rangle$ introduced as
\begin{align}
  \cfrac{%
    \textstyle\sum_{x \in X}\sum_{y \in Y}I(x,y)}{%
    |X| \cdot |Y|} \cdot 100.
  \label{eqn:dens}
\end{align}
Notice that Figure~\ref{fig:slowfast} shows that the tendency of the graph-based
algorithm is that for sparse datasets (i.e., datasets where the value
of~\eqref{eqn:dens} is small) the running time is higher than for more dense
datasets. This is caused by the fact that in such cases, the graphs associated
to input data are usually more complex. On the contrary,
Figure~\ref{fig:fastTime} and Figure~\ref{fig:fastSize} show that
in the case of the alternative algorithm, the running time more or less copies
the size of the computed bases (in terms of the number of formulas contained
in the bases). This behavior may be considered more natural. For both,
we have used the same parameters: $\mathbf{L}$~with $5$ truth degrees
and $10^4$ randomly generated datasets with $10$ objects and $10$ attributes.

To sum up, the experiments presented in this section indicate that
(i) the method of computing bases with witnessed non-redundancy presented in
this paper can be applied even if ${}^*$ is not a globalization and its success
rate is relatively high, and (ii) the method significantly outperforms
the graph-based method. As we have mentioned in the section, further investigation
regarding the existence of systems of pseudo-intents for general hedges and
possible generalizations of Theorem~\ref{th:witnr_glob} are needed and
we consider these important open problems.

\subsubsection*{Acknowledgment}
Supported by grant no. \verb|P202/14-11585S| of the Czech Science Foundation.


\footnotesize
\bibliographystyle{amsplain}
\bibliography{osgaiwnr}

\providecommand{\bysame}{\leavevmode\hbox to3em{\hrulefill}\thinspace}
\providecommand{\MR}{\relax\ifhmode\unskip\space\fi MR }
\providecommand{\MRhref}[2]{%
  \href{http://www.ams.org/mathscinet-getitem?mr=#1}{#2}
}
\providecommand{\href}[2]{#2}
\begin{thebibliography}{10}

\bibitem{Arm:Dsdbr}
William~Ward Armstrong, \emph{Dependency structures of data base
  relationships}, Information Processing 74: Proceedings of IFIP Congress
  (Amsterdam) (J.~L. Rosenfeld and H.~Freeman, eds.), North Holland, 1974,
  pp.~580--583.

\bibitem{Baaz}
Mathias Baaz, \emph{Infinite-valued {G}{\"o}del logics with 0-1 projections and
  relativizations}, G{\"O}DEL '96, Logical Foundations of Mathematics, Computer
  Sciences and Physics (Berlin/Heidelberg), Lecture Notes in Logic, vol.~6,
  Springer-Verlag, 1996, pp.~23--33.

\bibitem{BaOb:Ocdgbi}
Konstantin Bazhanov and Sergei Obiedkov, \emph{Optimizations in computing the
  duquenne-guigues basis of implications}, Annals of Mathematics and Artificial
  Intelligence \textbf{70} (2014), no.~1--2, 5--24.

\bibitem{Bel:FRS}
Radim Belohlavek, \emph{Fuzzy {R}elational {S}ystems: {F}oundations and
  {P}rinciples}, Kluwer Academic Publishers, Norwell, MA, USA, 2002.

\bibitem{BeChVy:Ifdwfa}
Radim Belohlavek, Martina Chlupova, and Vilem Vychodil, \emph{Implications from
  data with fuzzy attributes}, Proc. 2004 IEEE International Conference on
  Advances in Intelligent Systems---Theory and Applications, University of
  Canberra, Centre de Recherche Public Henri Tudor, 2004, 5 pages.

\bibitem{BeCoEnMoVy:Aermdsod}
Radim Belohlavek, Pablo Cordero, Manuel Enciso, Angel Mora, and Vilem Vychodil,
  \emph{An efficient reasoning method for dependencies over similarity and
  ordinal data}, Modeling Decisions for Artificial Intelligence (Vicen{\c{c}}
  Torra, Yasuo Narukawa, Beatriz L{\'{o}}pez, and Mateu Villaret, eds.),
  Lecture Notes in Computer Science, vol. 7647, Springer Berlin Heidelberg,
  2012, pp.~408--419.

\bibitem{BeFuVy:Fcots}
Radim Belohlavek, Tatana Funiokova, and Vilem Vychodil, \emph{Fuzzy closure
  operators with truth stressers}, Logic Journal of IGPL \textbf{13} (2005),
  no.~5, 503--513.

\bibitem{BeVy:Faicnbumis}
Radim Belohlavek and Vilem Vychodil, \emph{Fuzzy attribute implications:
  Computing non-redundant bases using maximal independent sets}, AI 2005:
  Advances in Artificial Intelligence (Shichao Zhang and Ray Jarvis, eds.),
  Lecture Notes in Computer Science, vol. 3809, Springer Berlin Heidelberg,
  2005, pp.~1126--1129.

\bibitem{BeVy:Falaitvenb}
\bysame, \emph{Fuzzy attribute logic: attribute implications, their validity,
  entailment, and non-redundant basis}, Fuzzy Logic, Soft Computing \&
  Computational Intelligence: Eleventh International Fuzzy Systems Association
  World Congress, Tsinghua University Press and Springer, 2005, pp.~622--627.

\bibitem{BeVy:ICFCA}
Radim Belohlavek and Vilem Vychodil, \emph{Attribute implications in a fuzzy
  setting}, Formal Concept Analysis (Rokia Missaoui and J\"urg Schmidt, eds.),
  Lecture Notes in Computer Science, vol. 3874, Springer Berlin Heidelberg,
  2006, pp.~45--60.

\bibitem{BeVy:Cnbirdtga}
Radim Belohlavek and Vilem Vychodil, \emph{Computing non-redundant bases of
  if-then rules from data tables with graded attributes}, 2006 {IEEE}
  International Conference on Granular Computing, GrC 2006, {IEEE}, 2006,
  pp.~205--210.

\bibitem{BeVy:DASFAA}
Radim Belohlavek and Vilem Vychodil, \emph{Data tables with similarity
  relations: Functional dependencies, complete rules and non-redundant bases},
  Database Systems for Advanced Applications (Mong Lee, Kian-Lee Tan, and Vilas
  Wuwongse, eds.), Lecture Notes in Computer Science, vol. 3882, Springer
  Berlin Heidelberg, 2006, pp.~644--658.

\bibitem{BeVy:Falcrl}
\bysame, \emph{Fuzzy attribute logic over complete residuated lattices},
  Journal of Experimental \& Theoretical Artificial Intelligence \textbf{18}
  (2006), no.~4, 471--480.

\bibitem{BeVy:FHLI}
\bysame, \emph{Fuzzy {H}orn logic {I}}, Archive for Mathematical Logic
  \textbf{45} (2006), no.~1, 3--51.

\bibitem{BeVy:FHLII}
\bysame, \emph{Fuzzy {H}orn logic {II}}, Archive for Mathematical Logic
  \textbf{45} (2006), no.~2, 149--177.

\bibitem{BeVy:Pmfai}
\bysame, \emph{Properties of models of fuzzy attribute implications},
  Proceedings of SCIS \& ISIS 2006: Joint 3rd International Conference on Soft
  Computing and Intelligent Systems and 7th International Symposium on advanced
  Intelligent Systems, Tokyo Institute of Technology, Japan Society for Fuzzy
  Theory and Intelligent Informatics, 2006, pp.~291--296.

\bibitem{BeVy:Fcalh}
\bysame, \emph{Formal concept analysis and linguistic hedges}, International
  Journal of General Systems \textbf{41} (2012), no.~5, 503--532.

\bibitem{BeVy:ADfDwG}
\bysame, \emph{Attribute dependencies for data with grades}, CoRR
  \textbf{\href{http://arxiv.org/abs/1402.2071}{abs/1402.2071}} (2014).

\bibitem{BeMo:Tmfcduib}
Karell Bertet and Bernard Monjardet, \emph{The multiple facets of the canonical
  direct unit implicational basis}, Theoretical Computer Science \textbf{411}
  (2010), no.~22--24, 2155--2166.

\bibitem{CiHa:Tnbpfl}
Petr Cintula and Petr H\'ajek, \emph{Triangular norm based predicate fuzzy
  logics}, Fuzzy Sets and Systems \textbf{161} (2010), no.~3, 311--346.

\bibitem{CiHaNo1}
Petr Cintula, Petr H\'ajek, and Carles Noguera (eds.), \emph{Handbook of
  {M}athematical {F}uzzy {L}ogic, {V}olume 1}, Studies in Logic, Mathematical
  Logic and Foundations, vol.~37, College Publications, 2011.

\bibitem{CiHaNo2}
Petr Cintula, Petr H\'ajek, and Carles Noguera (eds.), \emph{Handbook of
  {M}athematical {F}uzzy {L}ogic, {V}olume 2}, Studies in Logic, Mathematical
  Logic and Foundations, vol.~38, College Publications, 2011.

\bibitem{DeMe:TNPL}
Bernard De~Baets and Radko Mesiar, \emph{Triangular norms on product lattices},
  Fuzzy Sets Syst. \textbf{104} (1999), no.~1, 61--75.

\bibitem{DeCa}
Claude Delobel and Richard~G. Casey, \emph{Decomposition of a data base and the
  theory of boolean switching functions}, IBM Journal of Research and
  Development \textbf{17} (1973), no.~5, 374--386.

\bibitem{EsGo:MTL}
Francesc Esteva and Llu{\'\i}s Godo, \emph{Monoidal t-norm based logic:
  {T}owards a logic for left-continuous t-norms}, Fuzzy Sets and Systems
  \textbf{124} (2001), no.~3, 271--288.

\bibitem{EsGoNo:Hedges}
Francesc Esteva, Llu\'\i s~Godo, and Carles Noguera, \emph{A logical approach
  to fuzzy truth hedges}, Information Sciences \textbf{232} (2013), 366--385.

\bibitem{Fagin}
Ronald Fagin, \emph{Functional dependencies in a relational database and
  propositional logic}, IBM Journal of Research and Development \textbf{21}
  (1977), no.~6, 534--544.

\bibitem{GaJiKoOn:RL}
Nikolaos Galatos, Peter Jipsen, Tomacz Kowalski, and Hiroakira Ono,
  \emph{{R}esiduated {L}attices: {A}n {A}lgebraic {G}limpse at {S}ubstructural
  {L}ogics, {V}olume 151}, 1st ed., Elsevier Science, San Diego, USA, 2007.

\bibitem{Ga:Tbaca}
Bernhard Ganter, \emph{Two basic algorithms in concept analysis}, Proceedings
  of the 8th International Conference on Formal Concept Analysis (Berlin,
  Heidelberg), ICFCA'10, Springer-Verlag, 2010, pp.~312--340.

\bibitem{GaWi:FCA}
Bernhard Ganter and Rudolf Wille, \emph{Formal concept analysis: Mathematical
  foundations}, 1st ed., Springer-Verlag New York, Inc., Secaucus, NJ, USA,
  1997.

\bibitem{Gog:LFS}
Joseph~A. Goguen, \emph{{$L$}-fuzzy sets}, Journal of Mathematical Analysis and
  Applications \textbf{18} (1967), no.~1, 145--174.

\bibitem{Gog:Lic}
\bysame, \emph{The logic of inexact concepts}, Synthese \textbf{19} (1979),
  325--373.

\bibitem{Got:Mfl}
Siegfried Gottwald, \emph{Mathematical fuzzy logics}, Bulletin of Symbolic
  Logic \textbf{14} (2008), no.~2, 210--239.

\bibitem{GuDu}
Jean-Louis Guigues and Vincent Duquenne, \emph{Familles minimales
  d'im\-pli\-ca\-tions informatives resultant d'un tableau de donn\'ees
  binaires}, Math. Sci. Humaines \textbf{95} (1986), 5--18.

\bibitem{Haj:BL1}
Petr H\'ajek, \emph{Basic fuzzy logic and {BL}-algebras}, Soft Computing
  \textbf{2} (1998), no.~3, 124--128.

\bibitem{Haj:MFL}
\bysame, \emph{Metamathematics of {F}uzzy {L}ogic}, Kluwer Academic Publishers,
  Dordrecht, The Netherlands, 1998.

\bibitem{Haj:Ovt}
\bysame, \emph{On very true}, Fuzzy Sets and Systems \textbf{124} (2001),
  no.~3, 329--333.

\bibitem{KMP:TN}
Erich~Peter Klement, Radko Mesiar, and Endre Pap, \emph{Triangular {N}orms}, 1
  ed., Springer, 2000.

\bibitem{KlYu}
George~J. Klir and Bo~Yuan, \emph{Fuzzy {S}ets and {F}uzzy {L}ogic: {T}heory
  and {A}pplications}, Prentice-Hall, Inc., Upper Saddle River, NJ, USA, 1995.

\bibitem{KoMeOjAc:Maclhch}
Jan Konecny, Jes{\'u}s Medina, and Manuel Ojeda-Aciego, \emph{Multi-adjoint
  concept lattices with heterogeneous conjunctors and hedges}, Annals of
  Mathematics and Artificial Intelligence \textbf{72} (2014), no.~1--2, 73--89.

\bibitem{KuVy:Flprrai}
Tomas Kuhr and Vilem Vychodil, \emph{Fuzzy logic programming reduced to
  reasoning with attribute implications}, Fuzzy Sets and Systems \textbf{262}
  (2015), 1--20.

\bibitem{Kw:Open2006}
L{\'e}onard Kwuida, \emph{Some open problems in formal concept analysis, {ICFCA
  2006}}, \par\url{http://www.upriss.org.uk/fca/problems06.pdf}.

\bibitem{LoBeCoEnMo:Tdobvr}
Estrella~Rodr{\'{\i}}guez Lorenzo, Karell Bertet, Pablo Cordero, Manuel Enciso,
  and Angel Mora, \emph{The direct-optimal basis via reductions}, Proceedings
  of the Eleventh International Conference on Concept Lattices and Their
  Applications, Ko{\v{s}}ice, Slovakia, October 7-10, 2014. (Karell Bertet and
  Sebastian Rudolph, eds.), {CEUR} Workshop Proceedings, vol. 1252,
  CEUR-WS.org, 2014,
  \url{http://ceur-ws.org/Vol-1252/cla2014_submission_18.pdf}, pp.~145--156.

\bibitem{Mai:TRD}
David Maier, \emph{Theory of {R}elational {D}atabases}, Computer Science Pr,
  Rockville, MD, USA, 1983.

\bibitem{MeOjAc:Dmacl}
Jes{\'u}s Medina and Manuel Ojeda-Aciego, \emph{Dual multi-adjoint concept
  lattices}, Information Sciences \textbf{225} (2013), 47--54.

\bibitem{Pav:Ofl1}
Jan Pavelka, \emph{On fuzzy logic {I}: {M}any-valued rules of inference},
  Mathematical Logic Quarterly \textbf{25} (1979), no.~3--6, 45--52.

\bibitem{Pav:Ofl2}
\bysame, \emph{On fuzzy logic {II}: {E}nriched residuated lattices and
  semantics of propositional calculi}, Mathematical Logic Quarterly \textbf{25}
  (1979), no.~7--12, 119--134.

\bibitem{Pav:Ofl3}
\bysame, \emph{On fuzzy logic {III}: {S}emantical completeness of some
  many-valued propositional calculi}, Mathematical Logic Quarterly \textbf{25}
  (1979), no.~25--29, 447--464.

\bibitem{Po:FB}
Silke Pollandt, \emph{Fuzzy-{B}egriffe: {F}ormale {B}egriffsanalyse unscharfer
  {D}aten}, Springer, 1997.

\bibitem{SaDePaFa:Ebrddfpl}
Yehoshua Sagiv, Claude Delobel, D.~Scott Parker, Jr., and Ronald Fagin,
  \emph{An equivalence between relational database dependencies and a fragment
  of propositional logic}, J. ACM \textbf{28} (1981), no.~3, 435--453.

\bibitem{TaTi:Gist}
Gaisi Takeuti and Satoko Titani, \emph{Globalization of intuitionistic set
  theory}, Annals of Pure and Applied Logic \textbf{33} (1987), 195--211.

\bibitem{UrVy:Dddosbd}
Lucie Urbanova and Vilem Vychodil, \emph{Derivation digraphs for dependencies
  in ordinal and similarity-based data}, Information Sciences \textbf{268}
  (2014), 381--396.

\bibitem{Vy:Omsgai}
Vilem Vychodil, \emph{On minimal sets of graded attribute implications},
  Information Sciences \textbf{294} (2015), 478--488.

\bibitem{Vy:Rfal}
\bysame, \emph{Rational fuzzy attribute logic}, CoRR
  \textbf{\href{http://arxiv.org/abs/1502.07326}{abs/1502.07326}} (2015).

\bibitem{Za:Afstilh}
Lotfi~A. Zadeh, \emph{A fuzzy-set-theoretic interpretation of linguistic
  hedges}, Journal of Cybernetics \textbf{2} (1972), no.~3, 4--34.

\bibitem{Za:lv1}
\bysame, \emph{The concept of a linguistic variable and its application to
  approximate reasoning--{I}}, Information Sciences \textbf{8} (1975), no.~3,
  199--249.

\bibitem{Za:lv2}
\bysame, \emph{The concept of a linguistic variable and its application to
  approximate reasoning--{II}}, Information Sciences \textbf{8} (1975), no.~4,
  301--357.

\bibitem{Za:lv3}
\bysame, \emph{The concept of a linguistic variable and its application to
  approximate reasoning--{III}}, Information Sciences \textbf{9} (1975), no.~1,
  43--80.

\end{thebibliography}

\end{document}